\documentclass{article}

\usepackage{arxiv}
\usepackage{natbib}

\usepackage[utf8]{inputenc}
\usepackage[T1]{fontenc}
\usepackage{hyperref}
\usepackage{url}
\usepackage{booktabs}
\usepackage{makecell}
\usepackage{multirow}
\usepackage{array}
\usepackage{amsfonts}
\usepackage{amsmath}
\usepackage{amssymb}
\usepackage{amsthm}
\usepackage{nicefrac}
\usepackage{microtype}
\usepackage{xcolor}
\usepackage{graphicx}
\usepackage{enumitem}
\usepackage{subcaption}
\usepackage{algorithm}
\usepackage{algpseudocode}
\usepackage{wrapfig}
\usepackage{needspace} %
\usepackage{caption}
\usepackage{tikz}
\usepackage{pgfplots}
\pgfplotsset{compat=1.18}
\usetikzlibrary{patterns,decorations.pathreplacing,calligraphy}

\newtheorem{proposition}{Proposition}

\newtheorem{corollary}{Corollary}
\newtheorem{remark}{Remark}

\definecolor{accblue}{HTML}{2E86C1}
\definecolor{ctorange}{HTML}{E67E22}

\definecolor{sagelight}{HTML}{F2F5F0}
\definecolor{sagedark}{HTML}{6B8E7B}
\definecolor{sandlight}{HTML}{F7F3E8}
\definecolor{sanddark}{HTML}{A89878}
\definecolor{bricklight}{HTML}{F6EDEA}
\definecolor{brickdark}{HTML}{A8807A}
\definecolor{deepteal}{HTML}{3D8585}
\definecolor{burntorange}{HTML}{CC7E35}
\definecolor{sagefill}{HTML}{DCE5D8}
\definecolor{sandfill}{HTML}{E6DCC5}
\definecolor{brickfill}{HTML}{E6CAC4}

\title{When Updating Stops Being Learning: \\Rethinking LLM Self-Evolution via learnable information gain}

\usepackage{authblk}

\author[1]{Chenxu Wang}
\author[2]{Chaozhuo Li\thanks{Corresponding author.}}
\author[1]{Xinze Shi}
\author[1]{Songyang Liu}
\author[1]{Kyrie You Wu}
\author[3]{Ziluowen Luo}
\author[4]{Shun Zhang}
\author[5]{Chenxi Li}
\author[1]{Litian Zhang}
\affil[1]{Beijing University of Posts and Telecommunications}
\affil[2]{Beijing Academy of Artificial Intelligence}
\affil[3]{Central South University}
\affil[4]{Graduate School of China Academy of Engineering Physics}
\affil[5]{The Chinese University of Hong Kong, Shenzhen}
\date{}
\renewcommand{\shorttitle}{When Updating Stops Being Learning}
\renewcommand{\headeright}{}
\renewcommand{\undertitle}{}

\begin{document}

\maketitle

\begin{abstract}
Self-evolution lets large language models (LLMs) improve iteratively using their own generated data, but often suffers from self-evolution degeneration: performance improves, plateaus, then declines. 
Existing methods address this issue at the component level, targeting either the Questioner or the Solver, and overlook that self-evolution is a tightly coupled system. 
We propose a holistic framework based on learnable information gain, which measures how much novel, parameterizable information a round provides relative to the previous round. 
Theoretically, this gain equals the Kullback–Leibler divergence between the two rounds' data distributions plus their entropy change. Practically, it is estimated by fitting a small language model to the previous round and scoring new data via negative log-likelihood. 
Based on this diagnostic, we propose ATRI (Adaptive Training Regulation via Information-gain), which reweights samples within a round and halts training across rounds when information gain remains low. 
Experiments on popular datasets demonstrate the superiority of our proposal. 
\end{abstract}

\section{Introduction}
\label{sec:intro}

Large language models have achieved significant progress in a range of tasks, yet they still depend on high-quality training data, which is becoming increasingly scarce and costly~\citep{chen2024self,yuan2024self}.
To address this, researchers have recently turned to \emph{self-evolution}, a paradigm that enables the model to iteratively improve itself without external supervision, and one that has proved effective in mathematical reasoning, code generation, and multi-step agent tasks~\citep{zelikman2022star,yuan2024self,chen2024self,wang2025ragen,huang2025r,zhao2025absolute}. 
Self-evolution generally follows a multi-round training paradigm. In each round, a Questioner performs \emph{Question Generation} and \emph{Question Selection} to build the round's question set, and the Solver then performs \emph{Answer Generation} and \emph{Answer Evaluation} to produce and score candidate answers. Finally, the Questioner and Solver are iteratively updated through supervised fine-tuning~\citep{zelikman2022star,chen2024self} or reward-based reinforcement learning~\citep{wang2025ragen}.

Despite the appeal of self-evolution approaches, they commonly encounter the challenge of \emph{self-evolution degeneration}, a phenomenon in which solver performance improves during early rounds, then plateaus, and ultimately degrades~\citep{shumailov2024ai,casco2024self,wang2025ragen,huang2025r} (Figure~\ref{fig:teaser}, left). 
Existing work addresses this issue from two complementary directions (Appendix~\ref{app:related}). 
The first direction seeks to explain the underlying mechanism: verifier-based accounts attribute it to reward hacking~\citep{khalaf2025inference}, data-based accounts to distributional or diversity collapse~\citep{shumailov2024ai,li2026r}, and gap-based accounts to the closing of the generation--verification gap~\citep{song2024mind}. 
The second direction focuses on mitigation: filtering-based methods select higher-quality training samples by leveraging reward variance or gradient statistics~\citep{wang2025ragen}, while regularization-based methods impose diversity penalties to prevent the generated data from becoming overly homogeneous~\citep{li2026r}.

\begin{figure}[htbp]
\centering
\includegraphics[width=0.95\textwidth]{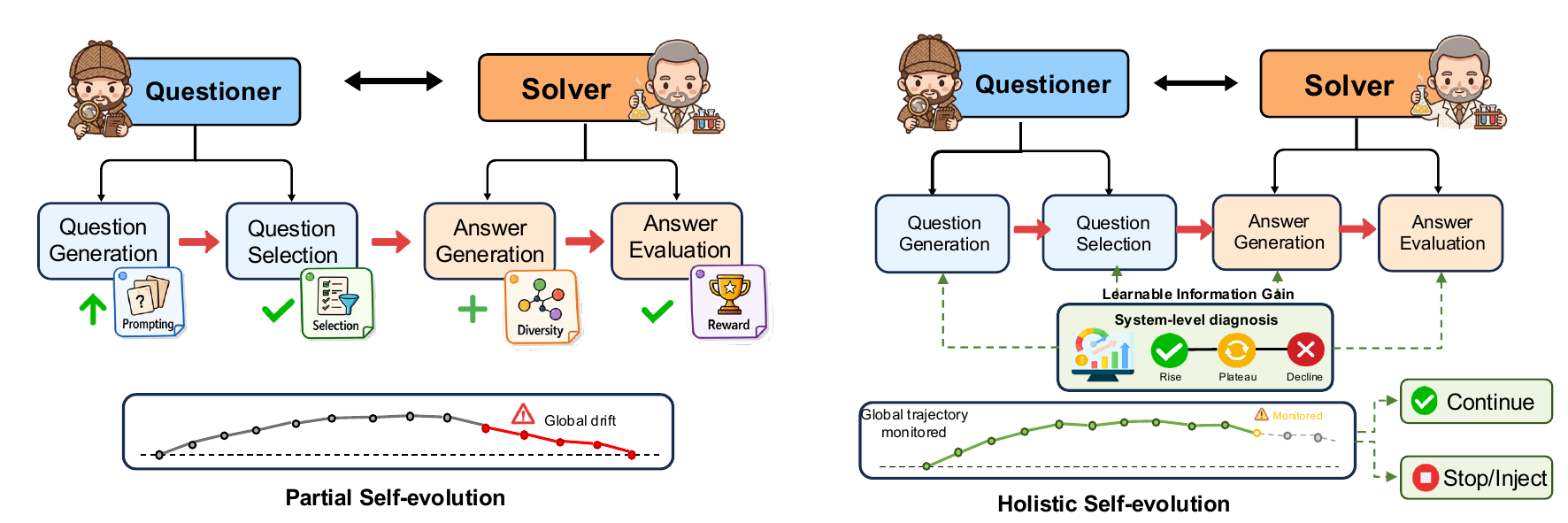}
\caption{Previous self-evolution loops monitor each component separately and drift into decline (left). Our ATRI model tracks system-level diagnostic (right).}
\label{fig:teaser}
\end{figure}

Despite this progress, existing methods that alleviate performance degradation generally diagnose and intervene from only a partial perspective, focusing solely on either the Questioner or the Solver~\citep{wang2025ragen,cui2025entropy,li2026r}, overlooking the fact that the Questioner–Solver system evolves as a coupled whole. 
For instance, reward-variance filtering operates exclusively at the \emph{Answer Evaluation} stage to mitigate reward hacking by discarding low-variance samples~\citep{wang2025ragen}. However, upstream stages such as \emph{Answer Generation} may already have undergone corruption and degeneration due to the reduced diversity of training data, ultimately limiting the effectiveness of such downstream mitigation strategies. 
More critically, remediation applied at one training stage may inadvertently exacerbate degeneration at other stages~\citep{wu2024progress}. 
The underlying cause is that self-evolution constitutes an interconnected system in which each component is tightly coupled to the others~\citep{sun2026theoretical}. 
Such localized interventions targeting a single component may be insufficient to redirect the overall optimization trajectory~\citep{yue2025does,wu2024progress}.

Different from existing approaches that focus on a component-level perspective, we propose to understand and mitigate self-evolution degradation from a holistic, system-level perspective grounded in first principles. 
As illustrated in Figure~\ref{fig:teaser}, our approach consolidates signals throughout the entire training pipeline to enable holistic understanding and remediation. The nucleus of self-evolution lies in improving LLMs by extracting valuable information from their own outputs, which is essentially an information reprocessing  process. 
To characterize this process quantitatively, we define the \textbf{learnable information gain} as the amount of extracted information that can be parameterized into the model and contributes to improving its performance. 
It theoretically quantifies how much novel content a training round's data contains compared with the previous round, being positive when new information appears and zero when the two rounds convey identical content. 
Tracking this gain across rounds thus reveals how much the model can still learn from its own outputs, allowing us to observe self-evolution as it approaches saturation and to forecast degradation before it occurs.

Concretely, we fit a probability distribution to the data generated at $(t{-}1)$-th round of self-evolution and use it to score the data from rounds $(t{-}1)$ and $t$ via negative log-likelihood (NLL). The learnable information gain is defined as the extent to which round $t$'s data scores worse (i.e., incurs higher NLL) than $(t{-}1)$-th round's data under this fitted distribution. The underlying intuition is as follows: the NLL of a sample with respect to a given distribution provides a precise measure of the information that this distribution fails to capture. Consequently, if the data from the new round remains poorly predicted by a model fit to the previous round, this indicates the presence of content that has not yet been learned. 
Theoretically, we show that this quantity admits a clean decomposition: it equals the KL divergence between the two rounds' data distributions plus their change in entropy under an exact fit, which endows it with a principled information-theoretic interpretation. In practice, the fitted distribution is implemented as a small language model retrained at each round. 
Building on this notion, we propose ATRI (Adaptive Training Regulation via Information-gain), a lightweight component designed to alleviate the challenge of degeneration. ATRI operates at two levels: \textit{within} a training round, each sample is reweighted according to how far its information-gain score exceeds the previous round's average, thereby concentrating gradient updates on novel content; \textit{across} rounds, ATRI halts training once the information gain remains below a small threshold for two consecutive rounds, preempting severe degeneration before it takes hold. We extensively evaluate our proposal on several popular datasets, and the experimental results demonstrate its superiority.

We make the following three major contributions:
\begin{itemize}[itemsep=1pt,topsep=3pt,parsep=0pt,leftmargin=*]
\item \textbf{A system-level diagnostic for self-evolution degeneration.} We introduce the \emph{learnable information gain}, prove that, under an exact fit, it decomposes into the KL divergence between rounds plus their entropy change, and use it to explain the mechanism underlying self-evolution degeneration.
\item \textbf{An information-gain-driven paradigm.} We propose ATRI, which reweights samples toward novel content within each round and halts self-evolution once the gain remains low across consecutive rounds, effectively mitigating degeneration.
\item \textbf{Extensive experimental evaluation.} We evaluate our approach on several reasoning datasets and demonstrate that it can forecast degeneration before it occurs and alleviate degeneration.
\end{itemize} 

\section{Preliminary Analysis}

\label{sec:preliminary}
\label{sec:obs}
\label{sec:bottleneck}

Prior work monitors training degeneration via signals such as reward variance~\citep{wang2025ragen}, policy entropy~\citep{cui2025entropy}, or output diversity~\citep{li2016diversity,zhu2018texygen}.
However, each signal captures only part of the self-evolution loop: reward variance reflects the answer evaluation module, policy entropy reflects the model policy, and diversity reflects only the generated outputs.
To demonstrate that such partial signals are incapable of reliably tracking performance degeneration, we design the following experiment based on a popular model, R-Zero~\citep{huang2025r}. 
R-Zero is trained on Qwen3-4B-Base with the MATH dataset~\citep{hendrycks2021measuring} for 15 rounds, following the original settings.
A generated answer is counted as correct only if it exactly matches the ground-truth answer.
At each round, we record held-out accuracy as the performance measure~\citep{wang2025ragen}, along with four partial signals: reward variance~\citep{wang2025ragen}, policy entropy~\citep{cui2025entropy}, distinct-$n$-gram ratio~\citep{li2016diversity}, and $1-\text{self-BLEU}$~\citep{zhu2018texygen}.
For plotting, all four partial signals are normalized across rounds. 

\begin{figure}[t]
    \centering
    \definecolor{myblue}{HTML}{2E86C1}
    \definecolor{myorange}{HTML}{E67E22}
    \definecolor{myred}{HTML}{C0392B}
    \definecolor{mygreen}{HTML}{27AE60}
    \definecolor{mypurple}{HTML}{8E44AD}
    \pgfplotsset{
        prelim/.style={
            width=5.6cm, height=4.2cm,
            xmin=-0.5, xmax=15.5,
            xtick={0,5,10,15},
            xlabel={\scriptsize round $t$},
            ylabel={\scriptsize acc (\%)},
            ymin=26, ymax=46,
            axis y line*=left, axis x line*=box,
            tick label style={font=\tiny},
            label style={font=\scriptsize},
            tick style={black, thin},
            major tick length=2pt,
            axis line style={black, thin},
            grid=none,
            enlarge x limits=false,
        },
        prelim right/.style={
            width=5.6cm, height=4.2cm,
            xmin=-0.5, xmax=15.5,
            ymin=0.2, ymax=1.05,
            axis y line*=right, axis x line=none,
            ylabel={\scriptsize normalized signal},
            tick label style={font=\tiny},
            label style={font=\scriptsize},
            tick style={black, thin},
            major tick length=2pt,
            axis line style={black, thin},
            enlarge x limits=false,
            legend style={
                font=\fontsize{4.5}{5}\selectfont, at={(0.07,0.04)}, anchor=south west,
                draw=none, fill=none,
                row sep=-3pt, inner sep=1pt,
                legend columns=2, column sep=3pt,
            },
            legend image post style={scale=0.6},
            legend cell align={left},
        },
        prelim ct/.style={
            width=5.6cm, height=4.2cm,
            xmin=-0.5, xmax=15.5,
            ymin=-0.18, ymax=0.5, ytick={0,0.2,0.4},
            axis y line*=right, axis x line=none,
            ylabel={\scriptsize learnable information gain},
            tick label style={font=\tiny},
            label style={font=\scriptsize},
            tick style={black, thin},
            major tick length=2pt,
            axis line style={black, thin},
            enlarge x limits=false,
            legend style={
                font=\fontsize{4.5}{5}\selectfont, at={(0.5,0.04)}, anchor=south,
                draw=none, fill=none,
                row sep=-3pt, inner sep=1pt,
                legend columns=-1, column sep=3pt,
            },
            legend image post style={scale=0.6},
            legend cell align={left},
        },
    }
    \begin{subfigure}[t]{0.45\textwidth}
    \centering
    \begin{tikzpicture}
    \begin{axis}[prelim]
    \addplot[myblue, semithick, mark=o, mark size=1pt, mark options={fill=white, draw=myblue, solid, line width=0.8pt}]
        coordinates {(0,30.05) (1,38.28) (2,41.23) (3,40.93) (4,42.99) (5,42.78) (6,42.62) (7,42.97) (8,42.43) (9,40.45) (10,39.36) (11,38.15) (12,37.22) (13,35.12) (14,31.57) (15,29.22)};
    \end{axis}
    \begin{axis}[prelim right]
    \addlegendimage{myblue, semithick, mark=o, mark size=1pt, mark options={fill=white, draw=myblue, solid, line width=0.8pt}}
    \addlegendentry{accuracy}
    \addplot[myorange, semithick, mark=o, mark size=1pt, mark options={fill=white, draw=myorange, solid, line width=0.8pt}]
        coordinates {(0,0.95) (1,0.943) (2,0.955) (3,0.927) (4,0.926) (5,0.956) (6,0.952) (7,0.955) (8,0.956) (9,0.914) (10,0.963) (11,0.931) (12,0.922) (13,0.966) (14,0.88) (15,0.946)};
    \addlegendentry{reward var.}
    \addplot[myred, semithick, densely dashed, mark=square*, mark size=1pt, mark options={fill=white, draw=myred, solid, line width=0.8pt}]
        coordinates {(0,0.849) (1,0.851) (2,0.849) (3,0.865) (4,0.834) (5,0.844) (6,0.846) (7,0.844) (8,0.873) (9,0.851) (10,0.858) (11,0.84) (12,0.805) (13,0.84) (14,0.837) (15,0.845)};
    \addlegendentry{policy entropy}
    \addplot[mygreen, semithick, densely dotted, mark=triangle*, mark size=1.2pt, mark options={fill=white, draw=mygreen, solid, line width=0.8pt}]
        coordinates {(0,0.78) (1,0.769) (2,0.789) (3,0.772) (4,0.75) (5,0.861) (6,0.811) (7,0.754) (8,0.839) (9,0.801) (10,0.778) (11,0.807) (12,0.812) (13,0.832) (14,0.851) (15,0.69)};
    \addlegendentry{dist-$n$-gram}
    \addplot[mypurple, semithick, dashdotted, mark=diamond*, mark size=1.2pt, mark options={fill=white, draw=mypurple, solid, line width=0.8pt}]
        coordinates {(0,0.723) (1,0.758) (2,0.776) (3,0.756) (4,0.738) (5,0.746) (6,0.689) (7,0.784) (8,0.74) (9,0.717) (10,0.731) (11,0.722) (12,0.733) (13,0.682) (14,0.716) (15,0.653)};
    \addlegendentry{$1{-}\text{self-BLEU}$}
    \end{axis}
    \end{tikzpicture}
    \caption{Partial signals vs.\ accuracy}
    \label{fig:preliminary_a}
    \end{subfigure}
    \hfill
    \begin{subfigure}[t]{0.45\textwidth}
    \centering
    \begin{tikzpicture}
    \begin{axis}[prelim]
    \addplot[myblue, semithick, mark=o, mark size=1pt, mark options={fill=white, draw=myblue, solid, line width=0.8pt}]
        coordinates {(0,30.05) (1,38.28) (2,41.23) (3,40.93) (4,42.99) (5,42.78) (6,42.62) (7,42.97) (8,42.43) (9,40.45) (10,39.36) (11,38.15) (12,37.22) (13,35.12) (14,31.57) (15,29.22)};
    \end{axis}
    \begin{axis}[prelim ct]
    \addlegendimage{myblue, semithick, mark=o, mark size=1pt, mark options={fill=white, draw=myblue, solid, line width=0.8pt}}
    \addlegendentry{accuracy}
    \addplot[myorange, semithick, mark=square*, mark size=1pt, mark options={fill=white, draw=myorange, solid, line width=0.8pt}]
        coordinates {(0,0.464) (1,0.345) (2,0.181) (3,-0.05) (4,-0.015) (5,-0.013) (6,-0.017) (7,0.003) (8,0.011) (9,-0.023) (10,-0.029) (11,-0.021) (12,0.011) (13,0.017) (14,-0.027) (15,-0.036)};
    \addlegendentry{learnable information gain}
    \addplot[black!50, dashed, line width=0.4pt, forget plot] coordinates {(-0.5,0) (15.5,0)};
    \end{axis}
    \end{tikzpicture}
    \caption{Information gain vs.\ accuracy}
    \label{fig:preliminary_b}
    \end{subfigure}
    \caption{Analysis on Qwen3-4B-Base with MATH. (a) Partial signals cannot reflect performance degradation. (b) Lower learnable information gain can signal later performance degradation.}
    \label{fig:preliminary}
\end{figure}

In Figure~\ref{fig:preliminary_a}, accuracy peaks at round~4 and declines through round~15, while the four partial signals fluctuate without signaling degeneration. In contrast, $C_t$ reaches zero at round~3 and then fluctuates around it (Figure~\ref{fig:preliminary_b}), anticipating the peak and tracking the loss of new learnable information.

\section{Methodology}
\label{sec:methodology}
Figure~\ref{fig:framework} provides an overview of ATRI. In each round, the Questioner is updated first, followed by the Solver, following the self-evolution loop described in Section~\ref{sec:intro}. ATRI keeps this loop and inserts one module into both trainings. The module computes the learnable information gain. It fits a proxy model on the previous round, scores the current round, and turns the scores into sample weights.  It reweights the Questioner update (Section~\ref{sec:q_training}), reweights the Solver update (Section~\ref{sec:s_training}), and determines when to stop (Section~\ref{sec:coevolution}).

\begin{figure}[t]
\centering
\includegraphics[width=\textwidth]{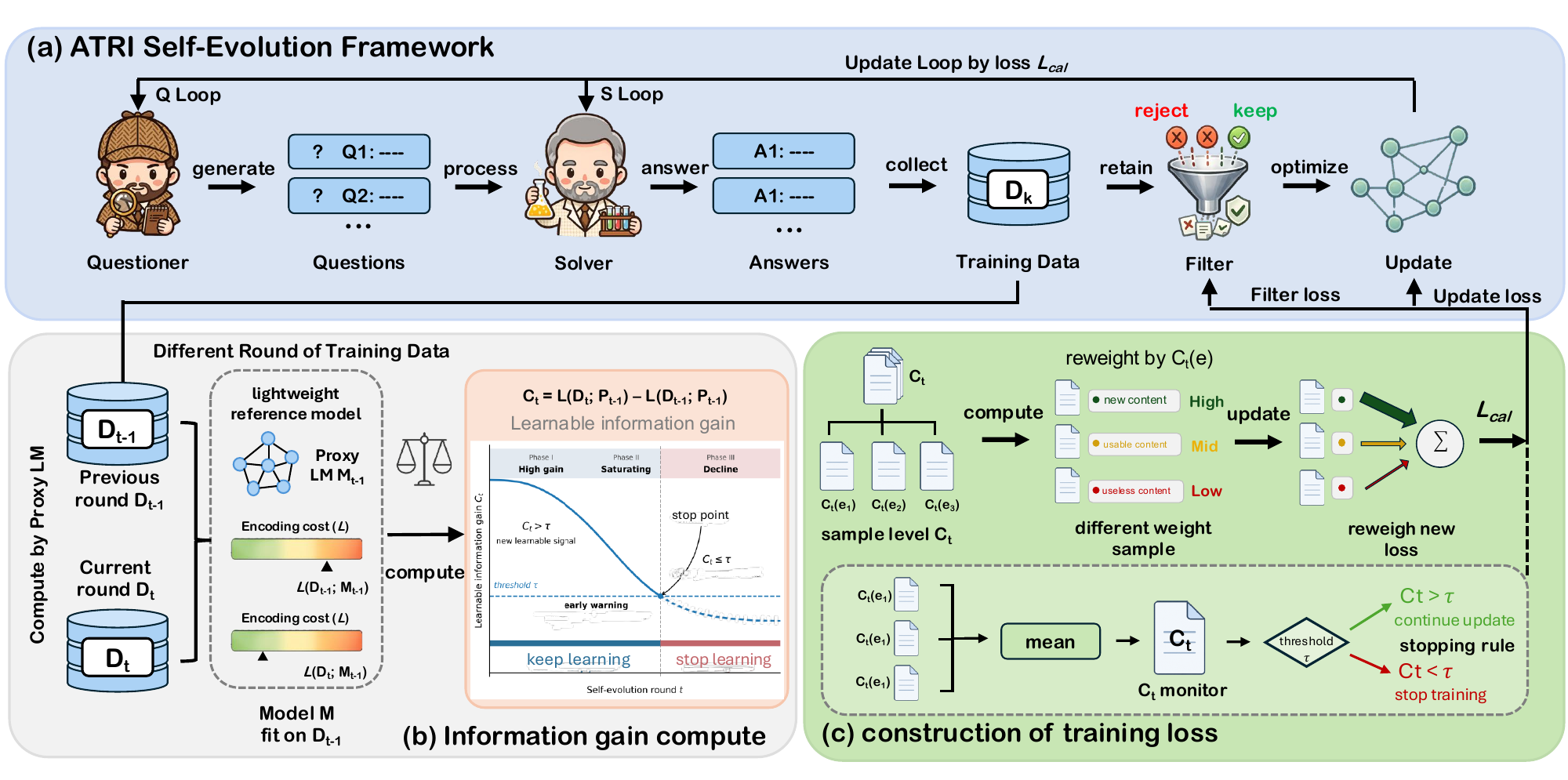}
\caption{The overview of the proposed ATRI framework.}
\label{fig:framework}
\end{figure}

\subsection{Questioner Training with the Learnable Information Gain}
\label{sec:q_training}

This subsection describes the training paradigm of the Questioner guided by the proposed learnable information gain. First, we present its general training procedure. We then explain how to estimate the information gain and use it to facilitate training.

\subsubsection{General Training Reward for Questioner}
{
\setlength{\abovedisplayskip}{2pt}
\setlength{\belowdisplayskip}{2pt}
\setlength{\abovedisplayshortskip}{2pt}
\setlength{\belowdisplayshortskip}{2pt}
The general reward of Questioner is following R-Zero~\citep{huang2025r}. 
Specifically, the Questioner $Q_\theta$ first generates a batch of $B$ questions $\{q_i\}_{i=1}^{B}$ from a fixed prompt $p_0$. A useful question should be hard yet solvable for the current Solver $S_\phi$. For each $q_i$, the Solver generates $m$ answers, the most frequent answer is taken as the pseudo-label $\tilde{y}_i$, and $\hat{p}_i$ denotes the fraction of the $m$ answers that agree with $\tilde{y}_i$. Since $\hat p_i$ approximates the Solver's success probability on $q_i$, the resulting \emph{uncertainty reward} is designed to peak when the Solver is correct about half of the time, i.e., when the question is maximally uncertain:
\begin{equation}
\label{eq:r_unc}
    r_{\mathrm{unc}}(q_i) = 1 - 2\,\big|\hat{p}_i - \tfrac{1}{2}\big|.
\end{equation}
Optimizing $r_{\mathrm{unc}}$ alone, however, can collapse the batch toward near-duplicate questions at the same difficulty level, so a redundancy penalty is imposed within the batch. Questions are clustered by BLEU similarity. For a question $q_i$ in cluster $\mathcal{C}_k$, the \emph{repetition penalty} is
\begin{equation}
r_{\mathrm{rep}}(q_i) = \lambda |\mathcal{C}_k|/B.
\end{equation}
Combining the two terms and zeroing out questions that fail the format check, the overall reward is
\begin{equation}
\label{eq:rQ}
    r_Q(q_i) = \max\big(0,\; r_{\mathrm{unc}}(q_i) - r_{\mathrm{rep}}(q_i)\big).
\end{equation}
The reward is then normalized within the batch into the GRPO advantage~\citep{shao2024deepseekmath},
\begin{equation}
\label{eq:advQ}
    \hat{A}_i = \frac{r_Q(q_i) - \mathrm{mean}\big(r_Q(q_1),\dots,r_Q(q_B)\big)}{\mathrm{std}\big(r_Q(q_1),\dots,r_Q(q_B)\big) + \epsilon},
\end{equation}
where $\operatorname{mean}(\cdot)$ and $\operatorname{std}(\cdot)$ are the batch-wise mean and standard deviation, respectively, yielding $\hat{A}_i$, which trains $Q_\theta$ with the standard GRPO loss.
}

\subsubsection{Learnable Information Gain Estimation}
\label{sec:gain}
\label{sec:ref_dist}
\label{sec:lm_impl}

The general reward only offers heuristic guidance within a single training round. It cannot measure whether the generated questions carry valuable, learnable information for the next round's evolution. 
To address this, ATRI uses a proxy model to estimate information gain, which measures how much the current round departs from the previous one.

\paragraph{Proxy Model Training.}

Consider the self-evolution loop after round $t-1$.
The Questioner and Solver yield a set of question--answer pairs
$\mathcal{D}_{t-1}=\bigl\{(q_{t-1}^{(i)},\,s_{t-1}^{(i)})\bigr\}_{i=1}^{N_{t-1}}$,
which serves as the training data for the next round.
We seek a compact and queryable summary of the round's outputs, so that different rounds can be compared on a common footing.

To this end, we fit a small language model $M_{t-1}$ on $\mathcal{D}_{t-1}$ and refer to it as the \emph{proxy}.
Two design choices are deliberate.
First, the proxy is not to solve the downstream task but to provide a low-variance estimate of the round's data distribution.
Second, the proxy models the text of the round rather than the
question-to-answer mapping. 
Each pair is concatenated into a single sequence $x=[q_{t-1};s_{t-1}]$ and treated as unstructured text, so that the phrasing of a question and the form of its solution are characterized jointly. 
For a sequence $x$ with tokens $x_1,\dots,x_{|x|}$, the proxy scores it as
{
\setlength{\abovedisplayskip}{2pt}
\setlength{\belowdisplayskip}{2pt}
\setlength{\abovedisplayshortskip}{2pt}
\setlength{\belowdisplayshortskip}{2pt}
\begin{equation}
    \ell(x) = -\frac{1}{|x|}\sum_{k=1}^{|x|}
    \log M_{t-1}\!\left(x_k \mid x_{<k}\right),
    \label{eq:proxy-cost}
\end{equation}
i.e., its negative log-likelihood per token.
Length normalization removes the trivial dependence on sequence length, making
$\ell(x)$ comparable across pairs of differing verbosity: $\ell(x)$ is small when
$x$ is typical of round $t-1$ and large when $x$ is atypical of round $t-1$.
The proxy is itself trained with the standard autoregressive language-modeling
objective: 
}
{
\setlength{\abovedisplayskip}{2pt}
\setlength{\belowdisplayskip}{2pt}
\setlength{\abovedisplayshortskip}{2pt}
\setlength{\belowdisplayshortskip}{2pt}
\begin{equation}
\label{eq:proxy_loss}
    \mathcal{L}_{\mathrm{proxy}}(M_{t-1}) = \frac{1}{|\mathcal{D}_{t-1}|}\sum_{x\in\mathcal{D}_{t-1}} \ell(x) = -\frac{1}{|\mathcal{D}_{t-1}|}\sum_{x\in\mathcal{D}_{t-1}} \frac{1}{|x|}\sum_{k=1}^{|x|} \log M_{t-1}\big(x_k \mid x_{<k}\big).
\end{equation}
}\noindent
The negative log-probability of a token is the difficulty of predicting it, and
training drives this difficulty down on $\mathcal{D}_{t-1}$.
Because this reduction is shared across the many sequences in
$\mathcal{D}_{t-1}$, it is the recurring patterns of the round that are learned,
rather than any single sequence in isolation.
After training, $M_{t-1}$ therefore predicts the patterns of round $t-1$ more easily, while text falling outside these patterns remains difficult to predict.
$\ell(x)$ thus measures how far a text lies from what the previous round
contains, and we adopt it as the measurement of information. 

\paragraph{Learnable Information Gain of a Question.}

Updating the Questioner requires assigning a scalar score to each generated question that reflects the amount of new information it contributes. Since the proxy model can evaluate the question component of a $(q, s)$ pair in isolation, we obtain a raw cost $\ell(q)$ for each question. 
However, this cost is not informative in absolute terms: it should be interpreted relative to a baseline. 
We therefore compare $\ell(q)$ with the previous-round mean difficulty,
{
\setlength{\abovedisplayskip}{2pt}
\setlength{\belowdisplayskip}{2pt}
\setlength{\abovedisplayshortskip}{2pt}
\setlength{\belowdisplayshortskip}{2pt}
\begin{equation}
    \bar{\ell}_q(\mathcal{D}_{t-1}) = \frac{1}{|\mathcal{D}_{t-1}|}\sum_{(q',s')\in\mathcal{D}_{t-1}} \ell(q'),
\end{equation}
which serves as a reference point for what the model already knows how to ask. The learnable information gain of a question $q$ is then defined as the deviation from this baseline:
\begin{equation}
\label{eq:cQ}
    c_t^Q(q) = \ell(q) - \bar{\ell}_q(\mathcal{D}_{t-1}).
\end{equation}
Intuitively, $c_t^Q(q) > 0$ indicates that $q$ has a larger prediction difficulty than the average question in the previous round, and thus likely elicits content beyond what earlier questions already covered.
}

\subsubsection{Training Objective Function}

\label{sec:cal_loss}

{
\setlength{\abovedisplayskip}{2pt}
\setlength{\belowdisplayskip}{2pt}
\setlength{\abovedisplayshortskip}{2pt}
\setlength{\belowdisplayshortskip}{2pt}

Having defined $c_t^Q$, we now integrate it into the Questioner update introduced in Section~\ref{sec:q_training}. The key idea is that questions with positive $c_t^Q$ carry information not already present in the previous round, and the update should therefore be concentrated on them. Since $c_t^Q$ can be negative, we take its positive part as the weight of a question,
\begin{equation}
    w_t^Q(q_i) = \max\big(c_t^Q(q_i),\, 0\big).
\end{equation}
This weight rescales the advantage term in Equation~\eqref{eq:advQ}, yielding a calibrated advantage
\begin{equation}
\label{eq:cal_Q}
    \tilde{A}_i = w_t^Q(q_i)\,\hat{A}_i,
\end{equation}
which replaces $\hat{A}_i$ in the standard GRPO objective to give the Questioner loss of ATRI,
\begin{equation}
\label{eq:loss_Q}
    \mathcal{L}^{Q}(\theta) = -\frac{1}{B}\sum_{i=1}^{B} \tilde{A}_i \log Q_\theta(q_i \mid p_0)
    + \beta\, \mathrm{KL}\big(Q_\theta \,\|\, Q_{\mathrm{ref}}\big),
\end{equation}
where $B$ is the batch size, as in Equation~\eqref{eq:advQ}, $Q_{\mathrm{ref}}$ denotes the Questioner at the start of the round, and we omit the GRPO clipping terms for clarity.
}

\subsection{Solver Training with the Learnable Information Gain}
\label{sec:s_training}

{
\setlength{\abovedisplayskip}{2pt}
\setlength{\belowdisplayskip}{2pt}
\setlength{\abovedisplayshortskip}{2pt}
\setlength{\belowdisplayshortskip}{2pt}
The Solver follows the same pattern as the Questioner.
After the Questioner update, the Solver $S_\phi$ is trained on questions proposed by the updated Questioner. The Questioner samples a pool of candidate questions, and for each candidate $q_i$ the Solver samples $m$ answers; the pseudo-label $\tilde{y}_i$ and agreement fraction $\hat{p}_i$ are obtained as in Section~\ref{sec:q_training}. Following the standard filtering strategy of Questioner--Solver methods~\citep{huang2025r,li2026r}, a candidate question is retained only if $|\hat{p}_i - \tfrac{1}{2}| \le \delta$, which discards questions that are too easy or too hard for the current Solver. The Solver then generates a fresh group of answers for each retained question, and these question-answer pairs form the training set $\mathcal{D}_t$ of round $t$.
Each answer $s_j$ to a retained question $q_i$ receives the binary reward $r_S(s_j \mid q_i) = \mathbb{1}[s_j = \tilde{y}_i]$, which is normalized within its group into the advantage $\hat{A}_j$ as in Equation~\eqref{eq:advQ}. The standard update trains $S_\phi$ with the GRPO loss using $\hat{A}_j$.

The same calibration module then applies on the answer side. The proxy scores each answer conditioned on its question, giving a raw prediction difficulty $\ell(s_j \mid q_i)$, and its learnable information gain is defined analogously to Equation~\eqref{eq:cQ} as the deviation from the previous-round mean cost:
\begin{equation}
\label{eq:cS}
    c_t^S(s_j\mid q_i) = \ell(s_j\mid q_i) - \bar{\ell}_{s\mid q}(\mathcal{D}_{t-1}),
\end{equation}
where $\bar{\ell}_{s\mid q}(\mathcal{D}_{t-1})$ denotes the average answer cost of the previous round. As on the Questioner side, we take the positive part $w_t^S(s_j\mid q_i) = \max\big(c_t^S(s_j\mid q_i),\,0\big)$ to rescale the advantage,
\begin{equation}
    \tilde{A}_j = w_t^S(s_j\mid q_i)\,\hat{A}_j,
\end{equation}
which yields the ATRI Solver objective, with $S_{\mathrm{ref}}$ denoting the round-start Solver:
\begin{equation}
\label{eq:cal_share}
    \mathcal{L}^{S}(\phi) = -\frac{1}{|\mathcal{D}_t|}\sum_{(q_i,s_j)\in\mathcal{D}_t} \tilde{A}_j \log S_\phi(s_j \mid q_i) + \beta\, \mathrm{KL}\big(S_\phi \,\|\, S_{\mathrm{ref}}\big),
\end{equation}
Appendix~\ref{app:cal_loss_other} presents the derivation for the shared-parameter setting.
}

\subsection{Iterative Co-Evolution Paradigm with Early Warning}
\label{sec:coevolution}

Sections~\ref{sec:q_training} and~\ref{sec:s_training} together specify the two updates that constitute one round of ATRI. Concretely, one round proceeds as follows: the proxy $M_{t-1}$ is trained on $\mathcal{D}_{t-1}$; the Questioner generates its questions and is updated with $\mathcal{L}^{Q}$; the Solver then forms $\mathcal{D}_t$ and is updated with $\mathcal{L}^{S}$; and the next round begins from the resulting Questioner and Solver. A natural question is when to stop this process: intuitively, rounds should stop once a new round no longer adds anything beyond the previous one. Answering this requires a score for the whole round, which the proxy model is capable of providing.

\paragraph{Learnable Information Gain of a Round.}
Thus far the proxy model has scored either the question part or the answer part of a pair. For a full round, we instead score each complete pair using its full text $x=[q;s]$. The cost of a round is the average cost over its pairs, $\bar{\ell}(\mathcal{D}) = \frac{1}{|\mathcal{D}|}\sum_{x \in \mathcal{D}} \ell(x)$. As with a single question in Equation~\eqref{eq:cQ}, we compare this cost against that of the previous round under the same proxy model, defining the learnable information gain of round $t$ as
{
\setlength{\abovedisplayskip}{2pt}
\setlength{\belowdisplayskip}{2pt}
\setlength{\abovedisplayshortskip}{2pt}
\setlength{\belowdisplayshortskip}{2pt}
\begin{equation}
\label{eq:Ct}
    C_t = \bar{\ell}(\mathcal{D}_t) - \bar{\ell}(\mathcal{D}_{t-1}).
\end{equation}
}
$C_t$ measures the additional cost of the current round relative to the previous one. A positive $C_t$ suggests that the current round contains patterns not captured in the previous round, while a value close to zero indicates that little additional content is introduced and that further rounds may be unnecessary. Since each pair contains both a question and an answer, $C_t$ can be decomposed into $c_t^Q$ and $c_t^S$. Appendix~\ref{app:Ct_decomposition} provides the corresponding question and answer terms. Appendix~\ref{app:proofs} shows that, under an exact fit, it equals the KL divergence between rounds plus the entropy change.

\paragraph{Three-phase Lifecycle.}
Across rounds, $C_t$ tends to decrease, since what is new at round $t$ is absorbed into the previous round by round $t+1$ (Proposition~\ref{prop:monotone}). This decreasing trend naturally divides self-evolution into three phases. In Phase~I, $C_t$ is positive and accuracy rises. In Phase~II, $C_t$ approaches zero and accuracy plateaus. In Phase~III, $C_t$ remains at zero while training continues; the model keeps fitting samples it can already generate, and accuracy degrades. Proposition~\ref{prop:phase3} shows that this phase behaves like distilling the model onto its own high-confidence outputs. The vanilla model run of Section~\ref{sec:obs} passes through all three phases (Figure~\ref{fig:preliminary} and the first panel of Figure~\ref{fig:lifecycle}). Its $C_t$ reaches zero at round~3, accuracy peaks at round~4, and the decline begins at round~9. $C_t$ therefore leads the accuracy peak by one round and the decline by six rounds, supporting its use as an early stopping signal. Section~\ref{sec:exp_lifecycle} verifies the same pattern across five additional methods.

\paragraph{Stopping Rule.}
We stop training once $C_t$ remains below a threshold $\tau$ for two consecutive rounds, where $\tau = 0.1\,C_1$ is set relative to the first round. This point marks the end of Phase II, beyond which training enters Phase III. Training may optionally continue on external data, with samples selected using a directional variant of $C_t$ that favors the model's current failure modes (Appendix~\ref{app:directional}).

\section{Experiments}
\label{sec:experiments}

\subsection{Experimental Setup}
\label{sec:exp_setup}

\paragraph{Dataset.} We follow the R-Diverse evaluation suite~\citep{li2026r} for direct comparability, comprising seven mathematical reasoning benchmarks (GSM8K~\citep{cobbe2021training}, MATH-500~\citep{hendrycks2021measuring}, AMC, Minerva~\citep{lewkowycz2022solving}, Olympiad~\citep{he2024olympiadbench}, AIME-2024, AIME-2025) and three general reasoning benchmarks (MMLU-Pro~\citep{wang2024mmlu}, SuperGPQA~\citep{du2025supergpqa}, BBEH~\citep{kazemi2025big}). We report per-benchmark accuracy, seven-benchmark math average (Math AVG), and ten-benchmark overall average (Overall AVG).

\paragraph{Baselines.} We compare ATRI against six self-evolution methods: \emph{base}, \emph{STaR}~\citep{zelikman2022star}, \emph{SPIN}~\citep{chen2024self}, \emph{AZR}~\citep{zhao2025absolute}, \emph{R-Zero}~\citep{huang2025r}, and \emph{R-Diverse}~\citep{li2026r}. All methods are run on the same base models and evaluated on the same suite for a controlled comparison.

\paragraph{Implementation details.} We use Qwen3-4B-Base and Qwen3-8B-Base as base models. The proxy $M_{t-1}$ used to compute $C_t$ is Pythia-160M~\citep{biderman2023pythia}, chosen for its small size and low computational cost while retaining sufficient capacity to fit the round-level data. It is fine-tuned on $\mathcal{D}_{t-1}$ for one epoch under the protocol of Section~\ref{sec:lm_impl}.  The stopping threshold is $\tau = 0.1\,C_1$. Hyperparameters (learning rate, batch size, decoding settings, evaluation protocol, and seed counts) are listed in Appendix~\ref{app:details}. All models are trained 5 times, and we report mean performance.

\subsection{Main comparison with existing self-evolution methods}
\label{sec:exp_main}

\begin{table}[t!]
\caption{Main results: ATRI versus existing self-evolution methods on Qwen3-4B-Base and Qwen3-8B-Base across seven mathematical reasoning benchmarks and three general reasoning benchmarks. \textbf{Bold}: best per column. \underline{Underline}: second best.}
\label{tab:main}
\centering
\scriptsize
\setlength{\tabcolsep}{4.5pt}
\renewcommand{\arraystretch}{1}
\resizebox{\linewidth}{!}{%
\begin{tabular}{lcccccccccccc}
\toprule[1.2pt]
\multirow{2}{*}[-0.3em]{\textbf{Method}}
 & \multirow{2}{*}[-0.3em]{\makecell{\textbf{Math}\\\textbf{AVG}}}
 & \multirow{2}{*}[-0.3em]{\makecell{\textbf{Overall}\\\textbf{AVG}}}
 & \multicolumn{7}{c}{\textbf{Mathematical Reasoning Benchmarks}}
 & \multicolumn{3}{c}{\textbf{General Reasoning}} \\
\cmidrule(lr){4-10} \cmidrule(lr){11-13}
 & & & \textbf{GSM8K} & \textbf{MATH} & \textbf{AMC} & \textbf{Minerva} & \textbf{Olymp.} & \textbf{AIME24} & \textbf{AIME25} & \textbf{MMLU-Pro} & \textbf{SuperGPQA} & \textbf{BBEH} \\
\midrule
\multicolumn{13}{@{}l}{\textit{Qwen3-4B-Base}} \\
Base Model                                 & 42.58 & 36.34 & 87.86 & 67.96 & 45.59 & 38.10 & 41.16 & 11.03 & 6.35 & 37.17 & 20.84 & 7.33 \\
STaR~\citep{zelikman2022star}              & 43.96 & 37.74 & 88.36 & 71.20 & 47.96 & 39.65 & 41.92 & 11.47 & 7.16 & 39.54 & 22.55 & 7.60 \\
SPIN~\citep{chen2024self}                  & 45.26 & 39.45 & 89.35 & 73.60 & 50.02 & 41.03 & 42.53 & 11.97 & 8.35 & 44.60 & 24.82 & 8.25 \\
AZR~\citep{zhao2025absolute}              & 46.51 & 41.38 & 89.49 & 76.31 & 52.47 & 42.20 & 42.50 & 12.23 & 10.36 & 52.66 & 27.28 & 8.34 \\
R-Zero~\citep{huang2025r}               & 48.94 & 43.20 & 92.22 & \underline{79.37} & 57.13 & 52.83 & 44.38 & 12.58 & 4.07 & 51.42 & 27.62 & \underline{10.35} \\
R-Diverse~\citep{li2026r}           & \underline{52.56} & \underline{46.20} & \underline{92.28} & 78.65 & \underline{60.11} & \textbf{59.78} & \textbf{47.03} & \textbf{19.22} & \underline{10.88} & \underline{55.35} & \underline{28.37} & 10.29 \\
\textbf{ATRI}                              & \textbf{53.55} & \textbf{47.39} & \textbf{93.24} & \textbf{81.20} & \textbf{61.42} & \underline{58.25} & \underline{46.98} & \underline{18.54} & \textbf{15.20} & \textbf{56.48} & \textbf{30.50} & \textbf{12.09} \\
\midrule
\multicolumn{13}{@{}l}{\textit{Qwen3-8B-Base}} \\
Base Model                                 & 49.27 & 43.32 & 89.32 & 78.07 & 51.98 & 50.09 & 44.91 & 13.99 & 16.53 & 51.57 & 28.24 & 8.51 \\
STaR~\citep{zelikman2022star}              & 49.96 & 44.05 & 89.66 & 78.62 & 53.29 & 51.11 & 45.28 & 14.40 & 17.36 & 53.18 & 28.83 & 8.82 \\
SPIN~\citep{chen2024self}                  & 51.44 & 45.57 & 90.88 & 79.18 & 55.44 & 54.05 & 46.35 & 16.06 & 18.10 & 56.40 & 29.90 & 9.32 \\
AZR~\citep{zhao2025absolute}              & 52.68 & 47.23 & 91.80 & 76.66 & 58.04 & 57.86 & 47.58 & 18.34 & 18.45 & 60.51 & 32.13 & 10.98 \\
R-Zero~\citep{huang2025r}               & 54.65 & 48.30 & 93.85 & \underline{82.11} & 61.76 & 60.68 & 48.77 & 16.42 & \underline{18.98} & 58.20 & 31.36 & 10.83 \\
R-Diverse~\citep{li2026r}           & \underline{56.49} & \underline{50.19} & \textbf{94.58} & 82.08 & \underline{66.02} & \underline{66.02} & \textbf{49.99} & \textbf{24.67} & 12.09 & \underline{61.86} & \underline{32.57} & \underline{12.06} \\
\textbf{ATRI}                              & \textbf{58.94} & \textbf{52.75} & \underline{94.47} & \textbf{83.60} & \textbf{73.90} & \textbf{67.32} & \underline{49.42} & \underline{22.85} & \textbf{21.05} & \textbf{64.79} & \textbf{35.72} & \textbf{14.39} \\
\bottomrule[1.2pt]
\end{tabular}%
}
\end{table}

We first compare the overall performance of ATRI with the self-evolution baselines. As shown in Table~\ref{tab:main}, ATRI achieves the best aggregate performance on both base models, and this advantage holds across both mathematical and general reasoning benchmarks, indicating that the improvement is not driven by any single task or model scale. We attribute this gain to the two complementary components of ATRI: within-round reweighting emphasizes more informative samples during training, while the cross-round \(C_t\)-based stopping rule halts evolution before it enters the degradation regime.

\subsection{The three-phase lifecycle recurs across existing methods}
\label{sec:exp_lifecycle}

\paragraph{Recurring lifecycle.} We record the per-round accuracy and $C_t$ trajectories of the six self-evolution baselines on Qwen3-4B-Base over 15 rounds. As shown in Figure~\ref{fig:lifecycle}, all methods exhibit a similar rise, plateau, and decline pattern.  This trend is general rather than method-specific: a closed loop trains on its own outputs, so once $\mathcal{D}_t$ exhausts the model's reachable region, the round-level information gain saturates. Continuing past this point amounts to fine-tuning $\pi_\theta$ on samples it can already generate, which yields the same gradient as self-distillation onto the model's high-confidence region (Proposition~\ref{prop:phase3}) and contracts its solution distribution.

\paragraph{Round-level information gain $C_t$ as an early signal.}
The same figure shows that $C_t$ falls below the alarm threshold before the accuracy peak. Across all six methods, $C_t$ falls below the alarm threshold $\tau$ one to four rounds before the peak, with larger lead times for methods with longer pre-peak plateaus. $C_t$ directly reflects the redundancy between consecutive rounds of data and therefore changes earlier than accuracy. The decline in accuracy typically becomes visible only after further training on redundant data. The empirical results support the prediction in Section~\ref{sec:coevolution} that $C_t \to 0$ signals the end of Phase~II, providing the basis for our stopping rule.

\paragraph{Direct evidence for Phase III contraction.}
Proposition~\ref{prop:phase3} predicts that the solution distribution contracts during Phase~III. We verify this on the Vanilla loop of Section~\ref{sec:obs}. At five checkpoints, we sample eight solutions per problem on the same 500 held-out problems, and from these samples compute pass@1, pass@8, the entropy over distinct solution paths, and the number of distinct correct paths per problem. As shown in Table~\ref{tab:phase3}, all four metrics decline after the accuracy peak: pass@8 falls by 19.8 points, path entropy by 36.5\%, and distinct correct paths by 46.6\%. The loop increasingly concentrates on a smaller set of solutions while gaining little new information, consistent with Proposition~\ref{prop:phase3}.
Appendix~\ref{app:generality} extends this analysis with Llama-3.1-8B and with MBPP code generation. The lifecycle recurs in both settings, and the alarm again fires one round before the peak. 

\begin{figure}[t]
\centering

\begin{minipage}[t]{0.53\linewidth}
\vspace{0pt}
\centering

\captionof{table}{
Phase III contraction on the Vanilla loop (Qwen3-4B-Base, MATH).
All metrics are computed from eight sampled solutions per problem on the
same 500 held-out problems at each checkpoint.
}
\label{tab:phase3}

\vspace{2pt}
\scriptsize
\setlength{\tabcolsep}{4pt}
\renewcommand{\arraystretch}{1.0}

\resizebox{\linewidth}{!}{%
\begin{tabular}{llcccc}
\toprule[1.2pt]
\textbf{Checkpoint} &
\textbf{Round} &
\textbf{Pass@1} &
\textbf{Pass@8} &
\makecell{\textbf{Path}\\\textbf{entropy}} &
\makecell{\textbf{Distinct}\\\textbf{correct paths}} \\
\midrule
Before threshold   & 2  & 41.58 & 73.8 & 1.887 & 2.704 \\
Threshold crossing & 3  & 42.30 & 74.6 & 1.906 & 2.736 \\
Accuracy peak      & 4  & 42.99 & 75.4 & 1.846 & 2.643 \\
Phase III          & 8  & 38.28 & 68.6 & 1.537 & 2.087 \\
Final round        & 15 & 28.93 & 55.6 & 1.173 & 1.412 \\
\bottomrule[1.2pt]
\end{tabular}%
}

\end{minipage}
\hfill
\begin{minipage}[t]{0.43\linewidth}
\captionof{figure}{
$C_t$ alarm vs.\ accuracy peak on the six methods of
Figure~\ref{fig:lifecycle}.
\textcolor{ctorange}{Orange}: alarm round.
\textcolor{accblue}{Blue}: lead to the accuracy peak.
}
\vspace{0pt}
\centering

\definecolor{accblue}{HTML}{2E86C1}
\definecolor{ctorange}{HTML}{E67E22}

\begin{tikzpicture}
    \begin{axis}[
        width=\linewidth,
        height=4.25cm,
        ybar stacked,
        bar width=7pt,
        ymin=0, ymax=10.5,
        symbolic x coords={Vanilla, STaR, SPIN, AZR, R-Zero, R-Diverse},
        xtick=data,
        ytick={0,2,4,6,8,10},
        ylabel={\scriptsize Round index},
        axis lines=box,
        tick label style={font=\tiny},
        label style={font=\scriptsize},
        tick style={black, thin},
        major tick length=2pt,
        axis line style={black, thin},
        grid=none,
        enlarge x limits=0.12,
        x tick label style={
            font=\fontsize{5.5}{6.5}\selectfont,
            rotate=35,
            anchor=north,
            yshift=-2pt
        },
    ]

    \addplot[
        fill=ctorange!75,
        draw=ctorange,
        line width=0.4pt
    ]
    coordinates {
        (Vanilla,3)
        (STaR,5)
        (SPIN,6)
        (AZR,6)
        (R-Zero,7)
        (R-Diverse,7)
    };

    \addplot[
        fill=accblue!75,
        draw=accblue,
        line width=0.4pt
    ]
    coordinates {
        (Vanilla,1)
        (STaR,4)
        (SPIN,1)
        (AZR,3)
        (R-Zero,2)
        (R-Diverse,1)
    };

    \end{axis}
\end{tikzpicture}

\label{fig:case_study}

\end{minipage}

\end{figure}

\begin{figure}[t]
    \centering
    \caption{Three-phase lifecycle across self-evolution methods (Qwen3-4B-Base / MATH). $C_t$ falls below the alarm threshold $\tau$ one to four rounds before the acc peak and then fluctuates around zero.}
    \definecolor{accblue}{HTML}{2E86C1}
    \definecolor{ctorange}{HTML}{E67E22}
    \pgfplotsset{
        lifebase/.style={
            width=5.0cm, height=3.7cm,
            xmin=-0.5, xmax=15.5,
            xtick={0,5,10,15},
            xlabel={\scriptsize round $t$},
            xlabel style={font=\scriptsize, yshift=4pt},
            ymin=26, ymax=51,
            axis y line*=left, axis x line*=box,
            tick label style={font=\tiny},
            label style={font=\scriptsize},
            tick style={black, thin},
            major tick length=2pt,
            axis line style={black, thin},
            grid=none,
            enlarge x limits=false,
            title style={font=\scriptsize, yshift=-4pt},
        },
        liferightbase/.style={
            width=5.0cm, height=3.7cm,
            xmin=-0.5, xmax=15.5,
            ymin=-0.18, ymax=0.5,
            axis y line*=right, axis x line=none,
            tick label style={font=\tiny},
            label style={font=\scriptsize},
            tick style={black, thin},
            major tick length=2pt,
            axis line style={black, thin},
            enlarge x limits=false,
        },
        leftcol/.style={
            lifebase, ylabel={\scriptsize acc (\%)}, ylabel style={font=\scriptsize, yshift=-4pt},
        },
        leftcolright/.style={
            liferightbase, ylabel={\scriptsize $C_t$}, ylabel style={font=\scriptsize, yshift=4pt, opacity=0}, ytick={0,0.2,0.4},
        },
        midcol/.style={
            lifebase, ylabel={\scriptsize acc (\%)}, ylabel style={font=\scriptsize, yshift=-4pt, opacity=0},
        },
        midcolright/.style={
            liferightbase, ylabel={\scriptsize $C_t$}, ylabel style={font=\scriptsize, yshift=4pt, opacity=0}, ytick={0,0.2,0.4},
        },
        rightcol/.style={
            lifebase, ylabel={\scriptsize acc (\%)}, ylabel style={font=\scriptsize, yshift=-4pt, opacity=0},
        },
        rightcolright/.style={
            liferightbase, ylabel={\scriptsize $C_t$}, ylabel style={font=\scriptsize, yshift=4pt}, ytick={0,0.2,0.4},
        },
    }
    \begin{subfigure}[t]{0.333\textwidth}
    \centering
    \begin{tikzpicture}
    \begin{axis}[leftcol, title={Vanilla}]
    \addplot[accblue, semithick, mark=o, mark size=0.9pt, mark options={fill=white, draw=accblue, solid, line width=0.6pt}]
        coordinates {(0,30.05) (1,38.28) (2,41.23) (3,40.93) (4,42.99) (5,42.78) (6,42.62) (7,42.97) (8,42.43) (9,40.45) (10,39.36) (11,38.15) (12,37.22) (13,35.12) (14,31.57) (15,29.22)};
    \end{axis}
    \begin{axis}[leftcolright]
    \addplot[ctorange, semithick, mark=square*, mark size=0.8pt, mark options={fill=white, draw=ctorange, solid, line width=0.6pt}]
        coordinates {(0,0.464) (1,0.345) (2,0.181) (3,-0.05) (4,-0.015) (5,-0.013) (6,-0.017) (7,0.003) (8,0.011) (9,-0.023) (10,-0.029) (11,-0.021) (12,0.011) (13,0.017) (14,-0.027) (15,-0.036)};
    \addplot[black!50, dashed, line width=0.4pt, forget plot] coordinates {(-0.5,0) (15.5,0)};
    \end{axis}
    \end{tikzpicture}
    \end{subfigure}%
    \begin{subfigure}[t]{0.333\textwidth}
    \centering
    \begin{tikzpicture}
    \begin{axis}[midcol, title={STaR}]
    \addplot[accblue, semithick, mark=o, mark size=0.9pt, mark options={fill=white, draw=accblue, solid, line width=0.6pt}]
        coordinates {(0,29.4) (1,35.35) (2,38.85) (3,42.38) (4,43.79) (5,43.72) (6,45.74) (7,45.2) (8,45.07) (9,45.95) (10,43.19) (11,42.61) (12,41.08) (13,39.61) (14,37.56) (15,36.29)};
    \end{axis}
    \begin{axis}[midcolright]
    \addplot[ctorange, semithick, mark=square*, mark size=0.8pt, mark options={fill=white, draw=ctorange, solid, line width=0.6pt}]
        coordinates {(0,0.437) (1,0.342) (2,0.234) (3,0.186) (4,0.107) (5,-0.035) (6,-0.025) (7,-0.053) (8,-0.069) (9,0.05) (10,0.001) (11,0.027) (12,0.021) (13,0.009) (14,-0.039) (15,-0.052)};
    \addplot[black!50, dashed, line width=0.4pt, forget plot] coordinates {(-0.5,0) (15.5,0)};
    \end{axis}
    \end{tikzpicture}
    \end{subfigure}%
    \begin{subfigure}[t]{0.333\textwidth}
    \centering
    \begin{tikzpicture}
    \begin{axis}[rightcol, title={SPIN}]
    \addplot[accblue, semithick, mark=o, mark size=0.9pt, mark options={fill=white, draw=accblue, solid, line width=0.6pt}]
        coordinates {(0,29.45) (1,33.05) (2,36.6) (3,40.61) (4,40.23) (5,42.96) (6,45.91) (7,47.55) (8,46.58) (9,46.52) (10,44.29) (11,43.1) (12,41.52) (13,38.78) (14,37.76) (15,37.6)};
    \end{axis}
    \begin{axis}[rightcolright]
    \addplot[ctorange, semithick, mark=square*, mark size=0.8pt, mark options={fill=white, draw=ctorange, solid, line width=0.6pt}]
        coordinates {(0,0.419) (1,0.293) (2,0.216) (3,0.114) (4,0.12) (5,0.077) (6,0.02) (7,-0.01) (8,0.031) (9,-0.02) (10,0.013) (11,0.047) (12,0.032) (13,0.007) (14,0.001) (15,0.002)};
    \addplot[black!50, dashed, line width=0.4pt, forget plot] coordinates {(-0.5,0) (15.5,0)};
    \end{axis}
    \end{tikzpicture}
    \end{subfigure}

    \begin{subfigure}[t]{0.333\textwidth}
    \centering
    \begin{tikzpicture}
    \begin{axis}[leftcol, title={AZR}]
    \addplot[accblue, semithick, mark=o, mark size=0.9pt, mark options={fill=white, draw=accblue, solid, line width=0.6pt}]
        coordinates {(0,28.37) (1,35.48) (2,40.59) (3,43.41) (4,42.54) (5,44.57) (6,47.74) (7,47.66) (8,47.49) (9,48.76) (10,48.46) (11,47.31) (12,44.96) (13,43.13) (14,41.66) (15,41.84)};
    \end{axis}
    \begin{axis}[leftcolright]
    \addplot[ctorange, semithick, mark=square*, mark size=0.8pt, mark options={fill=white, draw=ctorange, solid, line width=0.6pt}]
        coordinates {(0,0.394) (1,0.304) (2,0.253) (3,0.186) (4,0.21) (5,0.122) (6,0.001) (7,0.055) (8,0.057) (9,0.009) (10,-0.011) (11,0.014) (12,-0.026) (13,-0.042) (14,0.019) (15,0.037)};
    \addplot[black!50, dashed, line width=0.4pt, forget plot] coordinates {(-0.5,0) (15.5,0)};
    \end{axis}
    \end{tikzpicture}
    \end{subfigure}%
    \begin{subfigure}[t]{0.333\textwidth}
    \centering
    \begin{tikzpicture}
    \begin{axis}[midcol, title={R-Zero}]
    \addplot[accblue, semithick, mark=o, mark size=0.9pt, mark options={fill=white, draw=accblue, solid, line width=0.6pt}]
        coordinates {(0,27.72) (1,30.53) (2,31.65) (3,33.65) (4,36.96) (5,39.66) (6,44.41) (7,47.34) (8,47.45) (9,48.69) (10,48.43) (11,47.89) (12,48.58) (13,47.84) (14,45.12) (15,44.15)};
    \end{axis}
    \begin{axis}[midcolright]
    \addplot[ctorange, semithick, mark=square*, mark size=0.8pt, mark options={fill=white, draw=ctorange, solid, line width=0.6pt}]
        coordinates {(0,0.367) (1,0.409) (2,0.36) (3,0.288) (4,0.234) (5,0.135) (6,0.085) (7,-0.05) (8,-0.06) (9,-0.01) (10,-0.006) (11,-0.028) (12,0.013) (13,0.036) (14,0.007) (15,0.035)};
    \addplot[black!50, dashed, line width=0.4pt, forget plot] coordinates {(-0.5,0) (15.5,0)};
    \end{axis}
    \end{tikzpicture}
    \end{subfigure}%
    \begin{subfigure}[t]{0.333\textwidth}
    \centering
    \begin{tikzpicture}
    \begin{axis}[rightcol, title={R-Diverse}]
    \addplot[accblue, semithick, mark=o, mark size=0.9pt, mark options={fill=white, draw=accblue, solid, line width=0.6pt}]
        coordinates {(0,29.13) (1,31.9) (2,33.65) (3,36.74) (4,39.28) (5,40.32) (6,44.17) (7,45.55) (8,50.1) (9,48.49) (10,49.21) (11,49.81) (12,48.27) (13,48.08) (14,45.88) (15,45.33)};
    \end{axis}
    \begin{axis}[rightcolright]
    \addplot[ctorange, semithick, mark=square*, mark size=0.8pt, mark options={fill=white, draw=ctorange, solid, line width=0.6pt}]
        coordinates {(0,0.426) (1,0.386) (2,0.306) (3,0.27) (4,0.224) (5,0.112) (6,0.098) (7,0.011) (8,0.058) (9,-0.015) (10,0.009) (11,0.055) (12,-0.031) (13,-0.053) (14,-0.045) (15,0.063)};
    \addplot[black!50, dashed, line width=0.4pt, forget plot] coordinates {(-0.5,0) (15.5,0)};
    \end{axis}
    \end{tikzpicture}
    \end{subfigure}
    \label{fig:lifecycle}
\end{figure}

\subsection{Direction-aware selection of external samples}
\label{sec:exp_external}

When \(C_t\) approaches zero, the closed-loop data provide little additional learnable information, limiting further improvement. A common way to sustain learning is to introduce external data. We therefore investigate whether learnable information gain can be used to identify useful external samples. However, novelty alone is insufficient, since an external sample may be novel but already solvable by the model. ATRI therefore extends $C_t$ into a directional learnable information gain $C_t^d$. At the stopping round, we divide the generated pairs into two groups based on the Solver's answers: correct and incorrect. We then train a separate proxy on each group. $C_t^d$ of an external sample is its cost under the proxy of the correct part minus its cost under the proxy of the wrong part. A positive $C_t^d$ means that the sample resembles what the model gets wrong. A sample is added when both $C_t^d$ and the learnable information gain $c_t$ of the sample are positive (Appendix~\ref{app:directional}). Figure~\ref{fig:external} compares this rule with \emph{Random}, \emph{Difficulty-only}, \emph{R-Diverse}~\citep{li2026r}, and \emph{$c_t$-only} under the same budget. $C_t^d$-only beats all four at every budget. The combination $C_t^d{+}c_t$ leads from 1500 samples on, with the widest margin at 3000. $C_t^d$ finds the samples that resemble the model's failures. $c_t$ then removes those the model has already absorbed. 

\begin{figure}[t]
\centering

\begin{minipage}[t]{0.43\linewidth}
\vspace{0pt}
\centering

\captionof{figure}{
External-data selection comparison.
Math AVG gain from the round-5 checkpoint vs.\ externally added samples.
}
\label{fig:external}
\vspace{1pt}

\definecolor{extra}{HTML}{2E86C1}
\definecolor{extrb}{HTML}{E67E22}
\definecolor{extrc}{HTML}{16A085}
\definecolor{extrd}{HTML}{C0392B}
\definecolor{extre}{HTML}{7F8C8D}
\definecolor{extrf}{HTML}{8E44AD}

\begin{tikzpicture}
\begin{axis}[
    width=5.9cm,
    height=4.5cm,
    xmin=-150, xmax=3150,
    ymin=-0.3, ymax=7,
    xtick={0,500,1000,1500,2000,2500,3000},
    ytick={0,2,4,6},
    xlabel={\scriptsize Number of external samples added},
    ylabel={\scriptsize Math AVG gain (pp)},
    axis lines=box,
    tick label style={font=\tiny},
    label style={font=\scriptsize},
    tick style={black, thin},
    major tick length=2pt,
    axis line style={black, thin},
    grid=none,
    enlarge x limits=false,
    legend style={
        font=\fontsize{5}{6}\selectfont,
        at={(0.02,1)},
        anchor=north west,
        draw=none,
        fill=none,
        row sep=-2pt,
        inner sep=1pt,
        legend columns=2,
        column sep=3pt,
    },
    legend cell align={left},
]

\addplot[
    extrd, semithick, mark=*,
    mark size=1.4pt,
    mark options={fill=white, draw=extrd, solid, line width=0.8pt}
]
coordinates {
    (0,0.0) (500,1.83) (1000,3.34) (1500,4.47)
    (2000,5.05) (2500,5.78) (3000,6.22)
};
\addlegendentry{$C_t^d{+}c_t$}

\addplot[
    extra, semithick, mark=*,
    mark size=1.2pt,
    mark options={fill=white, draw=extra, solid, line width=0.8pt}
]
coordinates {
    (0,0.0) (500,2.45) (1000,3.68) (1500,4.25)
    (2000,4.88) (2500,5.39) (3000,5.51)
};
\addlegendentry{$C_t^d$-only}

\addplot[
    extrc, semithick, mark=*,
    mark size=1.2pt,
    mark options={fill=white, draw=extrc, solid, line width=0.8pt}
]
coordinates {
    (0,0.0) (500,0.9) (1000,2.29) (1500,2.93)
    (2000,3.54) (2500,3.59) (3000,4.64)
};
\addlegendentry{$c_t$-only}

\addplot[
    extrb, semithick, mark=*,
    mark size=1.2pt,
    mark options={fill=white, draw=extrb, solid, line width=0.8pt}
]
coordinates {
    (0,0.0) (500,1.86) (1000,3.08) (1500,3.11)
    (2000,3.05) (2500,3.28) (3000,3.01)
};
\addlegendentry{Difficulty-only}

\addplot[
    extrf, semithick, mark=*,
    mark size=1.2pt,
    mark options={fill=white, draw=extrf, solid, line width=0.8pt}
]
coordinates {
    (0,0.0) (500,1.27) (1000,1.94) (1500,2.61)
    (2000,3.19) (2500,3.34) (3000,3.7)
};
\addlegendentry{R-Diverse}

\addlegendimage{empty legend}
\addlegendentry{}

\addplot[
    extre, semithick, densely dashed, mark=*,
    mark size=1.2pt,
    mark options={fill=white, draw=extre, solid, line width=0.8pt}
]
coordinates {
    (0,0.0) (500,0.42) (1000,0.85) (1500,1.64)
    (2000,1.59) (2500,2.41) (3000,2.34)
};
\addlegendentry{Random}

\end{axis}
\end{tikzpicture}

\end{minipage}%
\hfill
\begin{minipage}[t]{0.50\linewidth}
\vspace{0pt}
\centering

\captionsetup{skip=2pt}
\vspace{2pt}
\captionof{table}{
Hyperparameter sensitivity on Qwen3-4B-Base.
$^\dagger$: default.
}
\label{tab:sensitivity}

\scriptsize
\setlength{\tabcolsep}{6pt}
\renewcommand{\arraystretch}{1.1}
\setlength{\aboverulesep}{0.7pt}
\setlength{\belowrulesep}{0.7pt}

\begin{tabular}{l@{\hspace{8pt}}cc}
\toprule[1.2pt]
\textbf{Configuration}
&
\makecell{\textbf{Math}\\\textbf{AVG}}
&
\makecell{\textbf{Overall}\\\textbf{AVG}}
\\
\midrule

\textit{Stopping threshold} $\tau=\alpha C_1$ & & \\

\quad $\alpha=0.05$
& 53.18 & 47.02 \\

\quad $\alpha=0.10^{\dagger}$
& 53.55 & 47.39 \\

\quad $\alpha=0.20$
& 53.21 & 47.10 \\

\quad $\alpha=0.30$
& 52.46 & 46.18 \\

\midrule

\textit{Proxy model size} & & \\

\quad Pythia-70M
& 52.73 & 46.55 \\

\quad Pythia-160M$^{\dagger}$
& 53.55 & 47.39 \\

\quad Pythia-410M
& 53.62 & 47.45 \\

\bottomrule[1.2pt]
\end{tabular}

\end{minipage}

\end{figure}

\subsection{Contribution of each component}
\label{sec:exp_ablation}

ATRI's $C_t$ control comprises two components: the stop rule and the within-round weighting and filtering. We vary them independently on Qwen3-4B-Base, yielding the four combinations in Table~\ref{tab:ablation}. Removing both components costs 3.37 Math AVG relative to the full method, and the two components contribute unequally to this gap. The stop rule accounts for the larger share: without it, training continues past the alarm round into Phase~III, where accuracy reliably declines. The weighting and filtering contribute an additional 1.06 Math AVG on top of the stop rule under an identical stopping round, indicating that their benefit is independent of when training ends. Finally, adding the optional external phase on top of both components raises Math AVG by a further 2.12 (last row).

\subsection{Robustness and computational cost}
\label{sec:exp_sensitivity}

\paragraph{Robustness.} ATRI introduces two hyperparameters beyond the underlying training procedure: the stopping threshold $\tau = \alpha C_1$ and the proxy model size used to compute $C_t$. We sweep both around their defaults on Qwen3-4B-Base. As shown in Table~\ref{tab:sensitivity}, ATRI is robust within sensible ranges, with only the most aggressive setting losing accuracy. Raising $\alpha$ too far stops ATRI earlier than necessary, whereas lowering it delays the alarm and lets training drift into Phase~III. Stopping slightly late costs less than stopping too early. The proxy size has an even smaller effect within the standard range, with the default at the elbow between underfitting and a 2.5-fold proxy cost. Appendix~\ref{app:stop_robust} provides additional stopping robustness checks.

\paragraph{Computational cost.} The dominant cost of self-evolution is per-round generation and base-model fine-tuning. We measure the extra cost of ATRI's $C_t$ layer. As shown in Table~\ref{tab:efficiency}, $C_t$ adds only a small fixed overhead from proxy training and per-sample scoring relative to the bare loop on Qwen3-8B-Base. This overhead is small because the Pythia-160M proxy cost does not scale with base-model size. As the base model grows, the relative overhead approaches the proxy-to-base parameter ratio, making the $C_t$ control layer essentially free at current LLM scales. Measured GPU-hours show the overhead falling from 4.1\% to 1.3\% across three scales (Appendix~\ref{app:limitations}).

\begin{figure}[h]
\centering

\begin{minipage}[t]{0.46\linewidth}
\vspace{0pt}
\centering

\captionsetup{skip=1pt}
\captionof{table}{Ablation of ATRI's $C_t$ control on Qwen3-4B-Base.}
\label{tab:ablation}

\scriptsize
\setlength{\tabcolsep}{2.5pt}
\renewcommand{\arraystretch}{1}
\setlength{\aboverulesep}{0.7pt}
\setlength{\belowrulesep}{0.7pt}

\resizebox{\linewidth}{!}{%
\begin{tabular}{l@{\hspace{4pt}}cccc}
\toprule[1.2pt]
\textbf{Configuration} &
\makecell{\textbf{Math}\\\textbf{AVG}} &
\textbf{$\Delta_{\text{M}}$} &
\makecell{\textbf{Overall}\\\textbf{AVG}} &
\textbf{$\Delta_{\text{O}}$} \\
\midrule
Stop + filter (full ATRI) & 53.55 & ---     & 47.39 & ---     \\
Stop only                 & 52.49 & $-1.06$ & 46.31 & $-1.08$ \\
Filter only               & 51.27 & $-2.28$ & 45.16 & $-2.23$ \\
Neither                   & 50.18 & $-3.37$ & 44.05 & $-3.34$ \\
\midrule
With external phase       & 55.67 & $+2.12$ & 49.60 & $+2.21$ \\
\bottomrule[1.2pt]
\end{tabular}%
}

\end{minipage}%
\hfill
\begin{minipage}[t]{0.45\linewidth}
\vspace{0pt}
\centering

\captionsetup{skip=1pt}
\captionof{table}{Per-round computational cost on Qwen3-8B-Base, in PFLOPs.}
\label{tab:efficiency}

\scriptsize
\setlength{\tabcolsep}{10pt}
\renewcommand{\arraystretch}{1.05}
\setlength{\aboverulesep}{0.7pt}
\setlength{\belowrulesep}{0.7pt}

\begin{tabular}{l@{\hspace{7pt}}cc}
\toprule[1.2pt]
\textbf{Step} & \textbf{Vanilla} & \textbf{ATRI} \\
\midrule
Generation                 & 16.2 & 16.2 \\
Main fine-tuning           & 71.6 & 71.6 \\
Proxy training             & --   & 1.4 \\
$C_t$ computation          & --   & 0.5 \\
\midrule
\textbf{Total per round}   & \textbf{87.8} & \textbf{89.7} \\
\textbf{Relative overhead} & -- & $+2.2\%$ \\
\bottomrule[1.2pt]
\end{tabular}

\end{minipage}

\end{figure}

\subsection{Case study: applying the diagnostic to existing methods}
\label{sec:case_study}

We validate $C_t$ as a monitor on the six Qwen3-4B-Base runs in Figure~\ref{fig:lifecycle}, tracking the signal passively during training. For each run, we record when it first falls below $\tau$ and when validation accuracy peaks. As shown in Figure~\ref{fig:case_study}, the alarm precedes the peak on all six baselines, with larger leads on longer pre-peak plateaus. This suggests that the signal detects dataset-level saturation in $\mathcal{D}_t$ before it is reflected in downstream accuracy. Because it depends only on $\mathcal{D}_t$ and a small reference proxy, the monitor applies unchanged to any self-evolution method. It is also actionable. Stopping once $C_t$ stays below $\tau$ for two consecutive rounds raises average accuracy from 39.01 to 46.32, within 0.83 of the oracle test-best checkpoint, without held-out validation data (Appendix~\ref{app:stop_robust}).
\section{Conclusion}
\label{sec:conclusion}

Self-evolution often suffered from degeneration, yet existing remedies targeted isolated components. We introduced learnable information gain, a system-level diagnostic that quantified novel information in each round and, under an exact fit, decomposed into the KL divergence between consecutive data distributions plus their entropy change. Based on this diagnostic, we proposed ATRI, which reweighted samples within a round and halted training across rounds when information gain saturated. Experiments showed that ATRI mitigated degeneration and improved self-evolution. 

{
\small
\bibliographystyle{iclr2027_conference}
\bibliography{references}
}

\appendix

\section{Related Work}
\label{app:related}

\paragraph{Self-evolution and self-play for LLMs.} Self-evolution covers methods in which an LLM improves from training signals it generates itself. Examples include self-training on filtered outputs~\citep{zelikman2022star,huang2023large,gulcehre2023reinforced,singh2024beyond}, self-rewarding~\citep{yuan2024self}, self-play~\citep{chen2024self,zhao2025absolute,huang2025r,li2026r,liu2026sbgm}, and reinforcement learning with majority-vote rewards~\citep{zuo2025ttrl}. Recent zero-data frameworks extend the Questioner--Solver loop to tool use~\citep{acikgoz2026tool} and open-ended generation~\citep{huang2026g}. \citet{li2025reinforced} apply a similar mutual-feedback loop between a retriever and a generator to domain adaptation. Several of these works report gains that stall or reverse when training continues for many rounds~\citep{huang2025r,shafayat2025can}. Label-free reinforcement learning shows a similar training collapse~\citep{zhang2026co,wang2026when}. Recent studies examine this collapse. \citet{bailey2026scaling} attribute the plateau of long self-play runs to a problem generator that hacks its reward. \citet{pu2026survive} find that the gate deciding which generated tasks enter training matters more for stability than the reward design. \citet{lin2026self} report a rise-then-collapse pattern in reinforcement learning on code and find that early stopping recovers part of the lost accuracy. These studies each focus on one component or one setting. A unified account of why the loop degrades is still missing. RAGEN~\citep{wang2025ragen} identified the ``Echo Trap'' failure mode and proposed reward variance as a diagnostic. Section~\ref{sec:obs} examines reward variance as a partial signal.

\paragraph{Model collapse and synthetic data degradation.} The model collapse literature~\citep{shumailov2024ai,dohmatob2024tale,dohmatob2024strong,seddik2024bad,casco2024self} shows that iterative training on self-generated data can lead to distributional collapse. Mixing in or accumulating real data can stabilize this process~\citep{bertrand2024stability,gerstgrasser2024model}. Closed-loop evolution can also degrade safety. \citet{wang2026devil} show that an isolated self-evolving agent society gradually loses its safety alignment. \citet{yan2026benign} find that experiences accumulated by a self-evolving agent can jointly weaken its safety boundary. This work studies the self-evolution setting, where the loop generates both the questions and the answers. Generative Containment (Appendix~\ref{sec:dpi}) gives an information-theoretic account of why a closed loop cannot add information on its own. The external phase of ATRI (Appendix~\ref{app:directional}) selects which external samples to add.

\paragraph{Limits of self-improvement.} \citet{yue2025does} showed that reinforcement learning with verifiable rewards mainly sharpens reasoning paths that the base model can already sample. \citet{huang2025sharpening} described self-improvement as sharpening the model toward its own high-likelihood outputs. \citet{wu2024progress} reported a self-improvement reversal, in which pass@1 improves while output diversity and out-of-distribution generalization decline. Search methods that combine several reasoning strategies can widen the reasoning paths a model explores~\citep{ha2025dsg}. \citet{cui2025entropy} linked the saturation of reinforcement learning for reasoning to the collapse of policy entropy. \citet{song2024mind} introduced the generation-verification gap, and \citet{sun2026theoretical} modeled self-improvement dynamics through the gap between solver and verifier. Generative Containment gives an information-theoretic account that is consistent with these findings.

\paragraph{Data selection with small reference models.} Several data selection methods score training samples with a small reference model. RHO-LOSS~\citep{mindermann2022prioritized} prioritizes points whose training loss is high but reducible, as judged by a model trained on holdout data. \citet{ankner2025perplexed} prune pretraining data by the perplexity of a small reference model. DoReMi~\citep{xie2023doremi} uses small proxy models to set domain weights. \citet{li2024quantity} select instruction data by a model-based difficulty score. These methods score data against a fixed reference. The learnable information gain instead scores each round against the previous round of the same loop. The same score then weights samples and decides when to stop.

\paragraph{MDL and compression in deep learning.} The MDL principle~\citep{rissanen1978modeling,grunwald2007minimum} has been applied to deep networks through prequential coding~\citep{blier2018description} and connected to singular learning theory~\citep{urdshals2025compressibility}. \citet{finzi2026entropy} define epiplexity as the information that a computationally bounded observer can learn from data and use it to guide data selection. This work applies the MDL view to the dynamics of self-evolution and derives a diagnostic from it.

\paragraph{Self-distillation theory.} \citet{mobahi2020self} proved that repeated self-distillation progressively limits the expressiveness of the solution. Proposition~\ref{prop:phase3} relates Phase~III to this result. Further training after the learnable information gain is exhausted behaves like self-distillation.

\section{Information-theoretic Boundary of Self-Evolution}
\label{sec:dpi}

The inconsistency observed in Section~\ref{sec:obs} suggests that partial signals miss the direction of the loop because the loop itself is subject to a constraint that those signals do not measure. We formalize this constraint here.

Let $P_t$ denote the model parameters at round $t$, treated as a random variable, and let $X$ denote the target capability the model is trying to acquire. In each round, the model generates training data $Q_t = G(P_t)$ through a generation function $G$. It then updates to $P_{t+1} = T(P_t, Q_t)$ through a training function $T$. The update starts from $P_t$, so $P_{t+1}$ depends on both $P_t$ and $Q_t$. Assume that $G$ and $T$ use no signal about $X$ beyond $P_t$. Their internal randomness, such as sampling noise, is independent of $X$. This gives the Markov chain
\begin{equation}
\label{eq:markov}
X \;\longrightarrow\; P_t \;\longrightarrow\; (P_t, Q_t) \;\longrightarrow\; P_{t+1}.
\end{equation}
The data $Q_t$ are a function of $P_t$ and independent noise. By the data processing inequality (DPI), processing a variable cannot increase its mutual information with a third variable, so
\begin{equation}
\label{eq:dpi_step1}
I(Q_t;\, X) \;\leq\; I(P_t;\, X).
\end{equation}
Since $P_{t+1} = T(P_t, Q_t)$, applying the DPI again yields
\begin{equation}
\label{eq:dpi_step2}
I(P_{t+1};\, X) \;\leq\; I(P_t, Q_t;\, X) \;=\; I(P_t;\, X).
\end{equation}
The equality holds because $Q_t$ carries no information about $X$ beyond $P_t$. Together, \eqref{eq:dpi_step1} and \eqref{eq:dpi_step2} give
\begin{equation}
\label{eq:dpi_result}
\max\bigl\{I(Q_t;\, X),\; I(P_{t+1};\, X)\bigr\} \;\leq\; I(P_t;\, X),
\end{equation}
where $I(\cdot\,;\cdot)$ denotes mutual information.

The inequality says that the loop produces no new information about $X$ from within. Whatever information each round uses is inherited from the previous round. Generation and training can only preserve or reduce it. We call this structural constraint \emph{Generative Containment}. It does not depend on the choice of $G$ or $T$ and follows from the Markov structure of the chain~\eqref{eq:markov} alone. It therefore holds for any self-evolution algorithm that meets the assumption above. Signals from outside the loop, such as external data or ground-truth verification, break the assumption and can add information about $X$.

Generative Containment describes one boundary of the information budget. The budget cannot grow on its own. It does not describe how the budget is spent, when it runs out, or what state the model enters once it does. The constructions in Section~\ref{sec:methodology} fill in this missing dynamics by tracking how much of the budget is still available for learning at each round.

\section{Within-round Decomposition of \texorpdfstring{$C_t$}{Ct}}
\label{app:Ct_decomposition}

This appendix derives the within-round decomposition of $C_t$ along the Question--Solver generation order. It splits $C_t$ into a question term and a solution term. It also restates the per-segment gains $c_t^Q(q)$ and $c_t^S(s\mid q)$ of Sections~\ref{sec:q_training} and~\ref{sec:s_training}.

In the common two-step Question--Solver setting, the model first generates a question $q$ and then a solution $s$ conditioned on $q$, producing $(q, s) \in \mathcal{D}_t$. Since $\ell$ is defined token by token, it extends to any contiguous segment. We write $\ell(q; P_{t-1})$ for the average NLL over the question tokens and $\ell(s \mid q; P_{t-1})$ for the average NLL over the solution tokens with $q$ as prefix. Per-token averages are not additive, so the decomposition carries length weights. With $\lambda_e = |q|/(|q|+|s|)$, multiplying the chain rule $\log P_{t-1}(q,s) = \log P_{t-1}(q) + \log P_{t-1}(s \mid q)$ by $-1/(|q|+|s|)$ gives
\begin{equation}
    \ell(q, s; P_{t-1}) = \lambda_e\, \ell(q; P_{t-1}) + (1 - \lambda_e)\, \ell(s \mid q; P_{t-1}).
    \label{eq:chain}
\end{equation}
Substituting~\eqref{eq:chain} into the definition of $C_t$ (Eq.~\eqref{eq:Ct}) and averaging over $\mathcal{D}_t$ and $\mathcal{D}_{t-1}$ separately gives
\begin{equation}
    C_t = \big[\ell^{\lambda}_q(\mathcal{D}_t) - \ell^{\lambda}_q(\mathcal{D}_{t-1})\big] + \big[\ell^{\lambda}_{s \mid q}(\mathcal{D}_t) - \ell^{\lambda}_{s \mid q}(\mathcal{D}_{t-1})\big],
    \label{eq:Ct_split}
\end{equation}
where $\ell^{\lambda}_q(\mathcal{D}) = \mathbb{E}_{(q,s) \in \mathcal{D}}[\lambda_e\, \ell(q; P_{t-1})]$ and $\ell^{\lambda}_{s \mid q}(\mathcal{D}) = \mathbb{E}_{(q,s) \in \mathcal{D}}[(1 - \lambda_e)\, \ell(s \mid q; P_{t-1})]$ are the length-weighted dataset averages. The decomposition is exact. The first term measures how novel the questions of round $t$ are relative to round $t-1$. The second measures how novel the solutions are, conditioned on the corresponding questions.

At the sample level, we define the per-segment gains $c_t^Q(q) = \ell(q; P_{t-1}) - \ell_q(\mathcal{D}_{t-1})$ and $c_t^S(s \mid q) = \ell(s \mid q; P_{t-1}) - \ell_{s \mid q}(\mathcal{D}_{t-1})$, where $\ell_q$ and $\ell_{s \mid q}$ are the unweighted dataset averages of the segment NLLs. Their positive parts are the weights in Equations~\eqref{eq:loss_Q} and~\eqref{eq:cal_share}.

\section{Proxy Model Capacity Selection}
\label{app:proxy}

This appendix gives the rationale behind our choice of the proxy language model $M_{t-1}$ used to instantiate $P_{t-1}$ in Section~\ref{sec:lm_impl}, and discusses why the result reported in Table~\ref{tab:sensitivity} is largely insensitive within a wide range of proxy capacities.

\paragraph{Design requirement.}
The proxy serves a single purpose. It distinguishes samples \emph{inside} $\mathcal{D}_{t-1}$ from samples \emph{outside}. After fitting $M_{t-1}$ on $\mathcal{D}_{t-1}$, the encoding cost $\ell_{M_{t-1}}(\cdot)$ should be (i) low and stable on $\mathcal{D}_{t-1}$, providing the baseline $\ell(\mathcal{D}_{t-1}; P_{t-1})$, and (ii) sensitive to deviations, so that samples from a different distribution receive a measurably higher cost. A proxy that fails (i) gives a noisy baseline. A proxy that fails (ii) cannot detect novelty.

\paragraph{Capacity trade-off.}
An over-parameterised proxy memorises $\mathcal{D}_{t-1}$ together with substantial neighbourhoods of it, so $\ell_{M_{t-1}}$ assigns low cost not just to $\mathcal{D}_{t-1}$ but to a much broader region. Criterion (ii) degrades, and $C_t$ shrinks toward zero on every input regardless of round. An under-capacity proxy underfits $\mathcal{D}_{t-1}$. The baseline $\ell(\mathcal{D}_{t-1}; P_{t-1})$ becomes a noisy estimate, criterion (i) degrades, and $C_t$ inherits this noise as round-to-round jitter that obscures the lifecycle.

\paragraph{Why the Pythia family.}
We use Pythia~\citep{biderman2023pythia} for three reasons. (i) The suite spans 70M to 12B parameters trained on a single fixed corpus (the Pile) under a single recipe, so capacity can be varied as a controlled axis without confounders from training data or recipe. (ii) The family is small enough at the lower end (70M / 160M / 410M) that one-epoch fine-tuning on $\mathcal{D}_{t-1}$ adds negligible compute relative to the main model fine-tune (Table~\ref{tab:efficiency}). (iii) The vocabulary and tokenizer remain fixed across sizes, so that encoding-cost values from different proxy sizes are directly comparable.

\paragraph{Empirical sweet spot.}
Table~\ref{tab:sensitivity} reports a sweep over Pythia-\{70M, 160M, 410M\}. Math AVG moves from 52.73 (70M) to 53.55 (160M) to 53.62 (410M). The 70M proxy underfits and adds noise. The 410M proxy is marginally better than 160M but raises proxy compute about 2.5-fold. The 160M default sits at the elbow. Above 1B the proxy-fitting cost begins to compete with main-model fine-tuning, and the over-parameterised regime would erode criterion (ii).

\paragraph{One-epoch fine-tuning.}
The proxy $M_{t-1}$ is fine-tuned on $\mathcal{D}_{t-1}$ for exactly one epoch from a fresh checkpoint at each round. Multi-epoch training drives $\ell_{M_{t-1}}$ closer to the global minimum on $\mathcal{D}_{t-1}$ but also expands the low-cost region around it, again degrading criterion (ii). One epoch is the minimum sufficient pass for the proxy to acquire $\mathcal{D}_{t-1}$'s gross statistics without memorising specific samples.

\paragraph{Decoupling from the main model.}
The proxy is a separate model that restarts from the pretrained Pythia checkpoint at every round. It never sees the main model's parameters. As a consequence, $C_t$ depends only on $\mathcal{D}_{t-1}$, $\mathcal{D}_t$, and the chosen proxy capacity. It does not depend on the main model's parameter scale, training algorithm, sampling temperature, or random seed. This decoupling is what makes $C_t$ a method-agnostic monitor in the sense of Section~\ref{sec:case_study}.

\section{Extensions of the Calibration Loss}
\label{app:cal_loss_other}

Sections~\ref{sec:q_training} and~\ref{sec:s_training} train the Questioner and the Solver as two models with GRPO. This appendix gives two extensions. The first is the shared-parameter case, where one model plays both roles. The second covers supervised fine-tuning and DPO. The principle is the same in both. The positive part of the gain rescales the per-sample term of the underlying objective.

\subsection{Shared-parameter case}

When one model $\pi_\theta$ generates both the question and the solution, $\theta$ and $\phi$ coincide. The losses in Equations~\eqref{eq:loss_Q} and~\eqref{eq:cal_share} then act on the same parameters,
\begin{equation}
    \mathcal{L}(\theta) = \mathcal{L}^{Q}(\theta) + \mathcal{L}^{S}(\theta).
    \label{eq:cal_shared}
\end{equation}
The gradients of both sides enter $\theta$ together. The weights $w_t^Q$ and $w_t^S$ stay the same.

\subsection{Supervised fine-tuning}

Under SFT, each training pair enters the negative log-likelihood with weight one. The calibrated form replaces this weight with $w_t^S$,
\begin{equation}
    \mathcal{L}_{\mathrm{cal}}^{\mathrm{SFT}}(\phi) = -\frac{1}{|\mathcal{D}_t|} \sum_{(q, s) \in \mathcal{D}_t} w_t^S(s \mid q) \log S_\phi(s \mid q).
\end{equation}
The Questioner side weights $\log Q_\theta(q \mid p_0)$ by $w_t^Q(q)$ in the same way. A sample whose gain is not positive receives zero weight.

\subsection{DPO}

In the self-play DPO setting~\citep{chen2024self}, each training instance is a preference pair $(e^+, e^-)$, where $e^+ = (q, s^+)$ is preferred over $e^- = (q, s^-)$. The standard DPO loss is
\begin{equation}
\mathcal{L}_{\mathrm{DPO}}(\phi) = - \mathbb{E}_{(e^+, e^-)} \left[ \log \sigma\!\left( \beta \log \tfrac{S_\phi(s^+ \mid q)}{S_{\mathrm{ref}}(s^+ \mid q)} - \beta \log \tfrac{S_\phi(s^- \mid q)}{S_{\mathrm{ref}}(s^- \mid q)} \right) \right].
\end{equation}
The calibrated form weights each pair by the gain of its preferred solution,
\begin{equation}
\mathcal{L}_{\mathrm{cal}}^{\mathrm{DPO}}(\phi) = - \mathbb{E}_{(e^+, e^-)} \left[ w_t^S(s^+ \mid q) \cdot \log \sigma\!\left( \beta \log \tfrac{S_\phi(s^+ \mid q)}{S_{\mathrm{ref}}(s^+ \mid q)} - \beta \log \tfrac{S_\phi(s^- \mid q)}{S_{\mathrm{ref}}(s^- \mid q)} \right) \right].
\end{equation}
When $w_t^S(s^+ \mid q)$ is small, $s^+$ is already covered by $\mathcal{D}_{t-1}$ and the pair contributes little gradient. When it is large, the pair drives the update. The negative branch $e^-$ needs no separate weight because it only provides contrast for $e^+$.

When the Questioner is trained with a preference objective, the same scheme uses $w_t^Q(q^+)$,
\begin{equation}
\mathcal{L}_{\mathrm{cal}}^{\mathrm{DPO},Q}(\theta) = - \mathbb{E}_{(q^+, q^-)} \left[ w_t^Q(q^+) \cdot \log \sigma\!\left( \beta \log \tfrac{Q_\theta(q^+ \mid p_0)}{Q_{\mathrm{ref}}(q^+ \mid p_0)} - \beta \log \tfrac{Q_\theta(q^- \mid p_0)}{Q_{\mathrm{ref}}(q^- \mid p_0)} \right) \right].
\end{equation}

\subsection{General principle}

For any training objective of the form $\mathcal{L} = \mathbb{E}_e \big[ w(e) \cdot \ell_{\mathrm{base}}(e) \big]$, the calibrated version replaces $w(e)$ with $w_t(e) \cdot w(e)$, where $w_t$ is the positive part of the gain. The base weight $w(e)$ is $1$ for SFT, the advantage for GRPO, and the implicit log-sigmoid gradient for DPO. The two-segment decomposition of Appendix~\ref{app:Ct_decomposition} applies whenever $\ell_{\mathrm{base}}$ admits a corresponding decomposition.

\section{Implementation Details}
\label{app:details}

This appendix collects the hyperparameters and protocol details omitted from Section~\ref{sec:experiments}. Default values are reported. Any deviation in a specific experiment is noted at the relevant location in the main text.

\paragraph{Main model fine-tuning.}
The Questioner and the Solver are trained with the GRPO objectives of Sections~\ref{sec:q_training} and~\ref{sec:s_training}. For both Qwen3-4B-Base and Qwen3-8B-Base, the optimizer is AdamW (learning rate $1\times 10^{-5}$, $\beta_1{=}0.9$, $\beta_2{=}0.95$, weight decay $0.1$), a constant schedule with 50-step linear warmup, batch size $128$ (gradient accumulation as needed), and bfloat16 mixed precision. Each round trains for one epoch over the round's $\mathcal{D}_t$ (or external pool $E$ in the external phase). Sequence length is capped at $2048$ tokens. Gradient clipping is set to $1.0$.

\paragraph{Generation.}
Sampling uses temperature $0.7$, top-$p$ $0.95$, and a maximum of $2048$ generated tokens. For each question the Solver draws $8$ answers per round, matching all baselines. The verifier-positive answers form $\mathcal{D}_t^+$ and the verifier-negative answers form $\mathcal{D}_t^-$.

\paragraph{Verifier.}
Answers are extracted and normalised before matching (numerical equivalence, fraction simplification, \LaTeX{} normalisation). In the ATRI loop, each answer is matched against the majority-vote pseudo-label $\tilde{y}_i$ of Section~\ref{sec:s_training}. The Vanilla loop of Section~\ref{sec:obs} matches against the ground-truth answer. No learned reward model is used. On general benchmarks no online verifier is needed because evaluation is offline.

\paragraph{Proxy fine-tuning.}
The proxy $M_{t-1}$ (Pythia-160M by default) is fine-tuned on $\mathcal{D}_{t-1}$ for one epoch from a fresh Pythia checkpoint at the start of each round, with AdamW (lr $5\times 10^{-5}$), batch size $64$, sequence length $1024$, and bfloat16. The directional proxies $M_t^+$ and $M_t^-$ use the same setup applied to $\mathcal{D}_t^+$ and $\mathcal{D}_t^-$. Proxy training adds $1.4$ PFLOPs per round on Qwen3-8B-Base configurations (Table~\ref{tab:efficiency}).

\paragraph{Stop threshold.}
The alarm threshold is $\tau = \alpha\,C_1$ with default $\alpha = 0.1$, where $C_1$ is the first-round learnable information gain. The closed loop ends when $C_t < \tau$ for two consecutive rounds. Table~\ref{tab:sensitivity} sweeps $\alpha \in \{0.05, 0.10, 0.20, 0.30\}$.

\paragraph{External pool.}
The external pool $E$ for the external phase is drawn from open-source math corpora (NuminaMath, MetaMathQA, OpenMathInstruct) with deduplication against the closed-loop questions. For each external candidate only the question is retained. The solution is generated and verified online by the current main model, just as in the closed-loop phase. The dual-gate filter $\{c_t(e) > 0,\ C_t^d(e) > 0\}$ is applied before the sample enters training.

\paragraph{Baselines.}
We use each baseline's official implementation when available (STaR, SPIN, AZR, R-Zero, R-Diverse) and re-run it on the same base models, generation budget, verifier, and evaluation protocol used for ATRI. \emph{Vanilla} is implemented in-house as the bare four-step loop without selection or reward shaping. The external pool is used only by ATRI's external phase and the selection study of Section~\ref{sec:exp_external}. No baseline trains on it.

\paragraph{Evaluation.}
Per-benchmark accuracy is computed with greedy decoding (temperature $0$) and standard zero-shot prompts. Math AVG and Overall AVG are unweighted means over the seven and ten benchmarks respectively. All numbers in Table~\ref{tab:main} are means over $5$ training runs. Standard deviations are within $\pm 0.4$ on Math AVG.

\section{Generality Across Model, Task, and Proxy Families}
\label{app:generality}

The main experiments use Qwen models on mathematical reasoning with a Pythia proxy. This appendix repeats the lifecycle analysis in two new settings. Llama-3.1-8B on MATH changes the base-model family. Qwen3-4B on MBPP changes the task to code generation. The Pythia-160M proxy and the threshold $\tau = 0.1\,C_1$ stay fixed.

\begin{table}[h]
\centering
\caption{Lifecycle and alarm behaviour across base models and tasks. Accuracy is held-out performance at the listed round.}
\label{tab:generality}
\scriptsize
\setlength{\tabcolsep}{6pt}
\renewcommand{\arraystretch}{1.15}
\begin{tabular}{llccccc}
\toprule[1.2pt]
\textbf{Setting} & \textbf{Task} & \textbf{Base model} & \makecell{\textbf{Peak}\\\textbf{round}} & \makecell{\textbf{Alarm}\\\textbf{round}} & \makecell{\textbf{Peak}\\\textbf{acc.}} & \makecell{\textbf{Round-15}\\\textbf{acc.}} \\
\midrule
Reference        & MATH & Qwen3-4B     & 4 & 3 & 42.99 & 29.22 \\
New model family & MATH & Llama-3.1-8B & 5 & 4 & 45.16 & 37.84 \\
New task type    & MBPP & Qwen3-4B     & 6 & 5 & 59.10 & 51.70 \\
\bottomrule[1.2pt]
\end{tabular}
\end{table}

Both new trajectories rise and then decline. The $C_t$ alarm fires one round before the peak in every setting. The lifecycle and the alarm timing therefore also appear beyond Qwen models and mathematical reasoning.

\paragraph{Proxy family.}
OPT-125M replaces Pythia-160M on the same Qwen3-4B/MATH trajectory. The main-model checkpoints and the retained datasets stay unchanged. The alarm fires at the same round. The sample weights of the two proxies have a Spearman correlation of 0.913. Running full ATRI with the OPT proxy changes the final scores by at most 0.14 points.

\section{Robustness of the Stopping Decision}
\label{app:stop_robust}

This appendix compares the $C_t$ stopping decision with alternative stopping rules. It also tests how the decision behaves under smaller per-round datasets, surface-level variation, task shifts, and round-free training. Unless stated otherwise, all experiments use the Qwen3-4B/MATH full-ATRI trajectory.

\paragraph{Stopping quality.}
Four stopping rules are compared on the six 15-round baseline runs monitored in Section~\ref{sec:case_study}, keeping every training procedure unchanged. Round~15 takes the final checkpoint. Validation stopping returns the best checkpoint once held-out accuracy fails to improve for two rounds. Oracle checkpointing picks the test-best checkpoint and gives an upper bound. Passive $C_t$ stopping returns the checkpoint at which $C_t$ has stayed below $\tau$ for two consecutive rounds. It uses the monitored $C_t$ alone and needs no validation data. Table~\ref{tab:stopping} reports the accuracy of each returned checkpoint.

\begin{table}[h]
\centering
\caption{MATH accuracy of the checkpoint returned by four stopping rules on
the six baseline runs of Section~\ref{sec:case_study}. The selected
round is shown in parentheses.}
\label{tab:stopping}
\scriptsize
\setlength{\tabcolsep}{6pt}
\renewcommand{\arraystretch}{1.15}
\begin{tabular}{lcccc}
\toprule[1.2pt]
\textbf{Method} & \textbf{Round 15} & \makecell{\textbf{Validation}\\\textbf{stopping}} & \makecell{\textbf{Oracle}\\\textbf{checkpointing}} & \makecell{\textbf{Passive $C_t$}\\\textbf{stopping}} \\
\midrule
Vanilla   & 28.93 (15) & 42.99 (4)  & 42.99 (4) & 42.99 (4) \\
STaR      & 36.37 (15) & 45.18 (8)  & 45.43 (9) & 44.87 (6) \\
SPIN      & 37.41 (15) & 47.55 (7)  & 47.55 (7) & 46.61 (6) \\
AZR       & 41.58 (15) & 48.11 (8)  & 48.27 (9) & 47.22 (7) \\
R-Zero    & 44.32 (15) & 48.62 (10) & 48.74 (9) & 47.86 (8) \\
R-Diverse & 45.47 (15) & 49.58 (9)  & 49.91 (8) & 48.36 (7) \\
\midrule
Average   & 39.01 (15.0) & 47.01 (7.7) & 47.15 (7.7) & 46.32 (6.3) \\
\bottomrule[1.2pt]
\end{tabular}
\end{table}

\paragraph{Finite-sample stability.}
Small per-round datasets make $C_t$ noisier. To measure the effect on the stopping decision, each round is bootstrap-resampled at four sizes. At each size, 1{,}000 trials recompute $C_t$ and apply the same two-consecutive-round stopping rule. Table~\ref{tab:bootstrap} reports how often the trials reproduce the full-data stopping round. Even at 25\% of the original size, 77.6\% of the trials reproduce it. Every trial stays within two rounds of the full-data decision.

\begin{table}[h]
\centering
\caption{Bootstrap stability of the stopping round at four per-round data sizes. Each row summarises 1{,}000 trials.}
\label{tab:bootstrap}
\scriptsize
\setlength{\tabcolsep}{6pt}
\renewcommand{\arraystretch}{1.15}
\begin{tabular}{lcc}
\toprule[1.2pt]
\makecell[l]{\textbf{Samples}\\\textbf{per round}} & \makecell{\textbf{Trials matching}\\\textbf{full-data round}} & \makecell{\textbf{Maximum}\\\textbf{shift}} \\
\midrule
100\% & 94.2\% & 1 round \\
75\%  & 91.3\% & 1 round \\
50\%  & 85.8\% & 1 round \\
25\%  & 77.6\% & 2 rounds \\
\bottomrule[1.2pt]
\end{tabular}
\end{table}

\paragraph{Surface-level controls.}
$C_t$ could in principle react to sample length or formatting rather than to new content. Three controls test this. Length matching equalises the length distributions of adjacent rounds and is repeated 100 times. Format standardisation normalises prompt templates, whitespace, and equivalent \LaTeX{} forms. The third setting combines both. Table~\ref{tab:surface} shows that every control keeps the original alarm and stopping rounds. The combined control stays at a correlation of 0.958 with the original trajectory and selects the same stopping round in 93.1\% of trials. The remaining trials differ by one round.

\begin{table}[h]
\centering
\caption{Stopping behaviour under surface-level controls. Correlation is Spearman against the original $C_t$ trajectory. Ranges are standard deviations over the 100 length-matching repetitions.}
\label{tab:surface}
\scriptsize
\setlength{\tabcolsep}{6pt}
\renewcommand{\arraystretch}{1.15}
\begin{tabular}{lcccc}
\toprule[1.2pt]
\textbf{Setting} & \textbf{Correlation} & \makecell{\textbf{Alarm}\\\textbf{shift}} & \makecell{\textbf{Stopping}\\\textbf{shift}} & \makecell{\textbf{Same stopping}\\\textbf{round}} \\
\midrule
Original            & 1.000            & 0 & 0 & 100\% \\
Length matched      & $0.982 \pm 0.007$ & 0 & 0 & 95.6\% \\
Format standardised & 0.969            & 0 & 0 & 100\% \\
Both controls       & $0.958 \pm 0.010$ & 0 & 0 & 93.1\% \\
\bottomrule[1.2pt]
\end{tabular}
\end{table}

\paragraph{Task shifts.}
A proxy fitted on one task may stop tracking the gain after the loop switches to another task. A run that switches from MATH to MBPP mid-loop tests this. One setting keeps the proxy fitted on the final MATH round. The other refits the proxy on the first MBPP round. Table~\ref{tab:taskshift} reports both settings next to the single-task runs of Table~\ref{tab:generality}. With the old-task proxy the alarm fires two rounds after the performance peak. Refitting moves the alarm back to one round before the peak. This matches the single-task runs. After a task switch the proxy should therefore be refitted on the first round of the new task.

\begin{table}[h]
\centering
\caption{Alarm timing under task shifts. The last column counts the rounds by which the alarm precedes the performance peak.}
\label{tab:taskshift}
\scriptsize
\setlength{\tabcolsep}{6pt}
\renewcommand{\arraystretch}{1.15}
\begin{tabular}{llccc}
\toprule[1.2pt]
\textbf{Setting} & \textbf{Proxy data} & \makecell{\textbf{Alarm}\\\textbf{round}} & \makecell{\textbf{Peak}\\\textbf{round}} & \makecell{\textbf{Rounds}\\\textbf{before peak}} \\
\midrule
MATH                  & Previous MATH round & 3 & 4 & 1 \\
MBPP                  & Previous MBPP round & 5 & 6 & 1 \\
MATH $\to$ MBPP       & Final MATH round    & 8 & 6 & $-2$ \\
MATH $\to$ MBPP       & First MBPP round    & 5 & 6 & 1 \\
\bottomrule[1.2pt]
\end{tabular}
\end{table}

\paragraph{Round-free training.}
Streaming and online settings have no round boundaries. The diagnostic only needs a data window, so rounds can be replaced by fixed token windows. On the same trajectory, windows of half and one times the average round size stop at the same checkpoint as the round-based rule. A window of two rounds stops one checkpoint later. A window on the order of one round of data is a reasonable default.

\section{Proofs}
\label{app:proofs}

This appendix collects the formal statements supporting the claims of Section~\ref{sec:methodology}. We use the notation of Section~\ref{sec:methodology}. $\mathcal{D}_t$ is the round-$t$ training set, and $P_{t-1}$ is the reference distribution fitted on $\mathcal{D}_{t-1}$. The encoding cost $\ell(x; P) = -\frac{1}{|x|}\sum_i \log P(x_i \mid x_{<i})$ is the token-normalized negative log-likelihood of Section~\ref{sec:gain}. The main text writes $\ell(x)$ and $\bar{\ell}(\mathcal{D})$ for $\ell(x; P_{t-1})$ and $\ell(\mathcal{D}; P_{t-1})$, with $P_{t-1}$ given by the proxy $M_{t-1}$.

\subsection{Boundary condition}

\begin{proposition}[Boundary condition]
\label{prop:boundary}
If $\mathcal{D}_t$ and $\mathcal{D}_{t-1}$ have the same empirical distribution, then $C_t = 0$.
\end{proposition}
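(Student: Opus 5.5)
The plan is to unfold the definition of $C_t$ in Equation~\eqref{eq:Ct} and observe that both averages are taken under the \emph{same} fixed scoring function $\ell(\cdot\,;P_{t-1})$. The proxy $P_{t-1}$ is fitted once on $\mathcal{D}_{t-1}$ and then frozen, so $\ell(x;P_{t-1})$ is a deterministic function of the text $x$ alone. It does not depend on which round $x$ came from, and it does not depend on $\mathcal{D}_t$.

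The first step is to rewrite each round average as an expectation under the round's empirical distribution. Let $\hat p_{\mathcal{D}}(x) = \frac{1}{|\mathcal{D}|}\sum_{y\in\mathcal{D}}\mathbb{1}[y=x]$ denote the empirical distribution of a dataset $\mathcal{D}$, with multiset counts for repeated sequences. Then
\begin{equation*}
\bar{\ell}(\mathcal{D}) = \frac{1}{|\mathcal{D}|}\sum_{x\in\mathcal{D}} \ell(x;P_{t-1}) = \sum_{x} \hat p_{\mathcal{D}}(x)\,\ell(x;P_{t-1}),
\end{equation*}
where the sum runs over the finite support of $\hat p_{\mathcal{D}}$.

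The second step substitutes the hypothesis $\hat p_{\mathcal{D}_t} = \hat p_{\mathcal{D}_{t-1}}$. The two expectations are then of the same function under the same distribution, so $\bar{\ell}(\mathcal{D}_t) = \bar{\ell}(\mathcal{D}_{t-1})$ and therefore $C_t = 0$. The argument does not require $|\mathcal{D}_t| = |\mathcal{D}_{t-1}|$, because the normalization by dataset size is absorbed into the empirical distribution.

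The only point needing care is to state explicitly that $P_{t-1}$ is fixed before $\mathcal{D}_t$ is scored, so that neither the scoring function nor any per-token length normalization changes between the two averages. Length normalization is harmless because $|x|$ is a function of $x$ itself. I would also remark that the same argument applies segment-wise to the question and answer terms of Equation~\eqref{eq:Ct_split}, since the weight $\lambda_e$ is likewise a function of the pair $(q,s)$.
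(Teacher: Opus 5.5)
Your proposal is correct and follows the paper's own proof: both write the two round averages as expectations of the same fixed score $\ell(\cdot\,;P_{t-1})$ under each round's empirical distribution, and conclude that equal empirical distributions give $C_t = 0$. Your extra observations, that the proxy is frozen before $\mathcal{D}_t$ is scored and that equal dataset sizes are not needed, make explicit what the paper's one-line argument leaves implicit.
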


\begin{proof}
By definition $C_t = \ell(\mathcal{D}_t; P_{t-1}) - \ell(\mathcal{D}_{t-1}; P_{t-1})$, where $\ell(\mathcal{D}; P) = \mathbb{E}_{x \sim \mathcal{D}}[\ell(x; P)]$. The two averages are taken over the same empirical distribution, so they coincide and $C_t = 0$.
\end{proof}

\subsection{Information-theoretic identity}

\begin{proposition}[General identity]
\label{prop:identity}
Let $p_t$ and $p_{t-1}$ denote the distributions underlying $\mathcal{D}_t$ and $\mathcal{D}_{t-1}$. For a distribution $p$ over sequences, let $H(p) = \mathbb{E}_{x\sim p}[\ell(x;p)]$ and $\mathrm{KL}(p\,\|\,q) = \mathbb{E}_{x\sim p}\bigl[\frac{1}{|x|}\log\frac{p(x)}{q(x)}\bigr]$ denote the per-token entropy and KL divergence. For any reference distribution $P_{t-1}$, replacing the empirical averages in Equation~\eqref{eq:Ct} with expectations gives
\begin{equation*}
C_t
=
\bigl[\mathrm{KL}(p_t \,\|\, P_{t-1}) - \mathrm{KL}(p_{t-1} \,\|\, P_{t-1})\bigr]
+
\bigl[H(p_t) - H(p_{t-1})\bigr].
\end{equation*}
\end{proposition}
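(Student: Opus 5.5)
The identity follows from a per-sample decomposition of the encoding cost, averaged under each round's distribution. First, replacing the empirical averages in Equation~\eqref{eq:Ct} with expectations, we write
\begin{equation*}
C_t = \mathbb{E}_{x\sim p_t}[\ell(x;P_{t-1})] - \mathbb{E}_{x\sim p_{t-1}}[\ell(x;P_{t-1})].
\end{equation*}
This is the population version of the quantity in Proposition~\ref{prop:boundary}.

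The key step is a pointwise decomposition. For any sequence $x$ and any distribution $p$ with $p(x)>0$, the autoregressive chain rule gives $\sum_i \log P(x_i\mid x_{<i}) = \log P(x)$. Hence $\ell(x;P) = -\frac{1}{|x|}\log P(x)$, and the same holds with $p$ in place of $P$. Adding and subtracting $\frac{1}{|x|}\log p(x)$ gives
\begin{equation*}
\ell(x;P_{t-1}) = \frac{1}{|x|}\log\frac{p(x)}{P_{t-1}(x)} + \ell(x;p).
\end{equation*}

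We apply this with $p=p_t$ and take $\mathbb{E}_{x\sim p_t}$. By the per-token definitions in the statement, the first term becomes $\mathrm{KL}(p_t\,\|\,P_{t-1})$ and the second becomes $H(p_t)$. Doing the same with $p=p_{t-1}$ under $\mathbb{E}_{x\sim p_{t-1}}$ gives $\mathrm{KL}(p_{t-1}\,\|\,P_{t-1}) + H(p_{t-1})$. Subtracting the second expression from the first and regrouping yields the stated formula. Under an exact fit, $P_{t-1}=p_{t-1}$, so the second KL term vanishes and we recover the "KL plus entropy change" reading from the main text.

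The main obstacle is technical rather than conceptual. We must ensure all expectations are finite and the logarithms are well defined. This requires absolute continuity, $p_t \ll P_{t-1}$ and $p_{t-1}\ll P_{t-1}$. For a softmax-parameterized proxy this is automatic, since it assigns positive probability to every sequence. We also need the relevant per-token log-likelihoods to be integrable, so that splitting the expectation of a sum into a sum of expectations is legitimate. I would state these as standing assumptions and note that if $\mathrm{KL}(p_t\,\|\,P_{t-1})=\infty$, both sides are $+\infty$. A second, minor point is that length normalization is applied per sequence inside the expectation, not as a ratio of expectations. The identity therefore holds exactly with the per-token KL and entropy as defined in the statement, and no approximation over random lengths $|x|$ is needed.
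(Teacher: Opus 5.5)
Your proposal is correct and follows the paper's proof: the paper likewise decomposes the expected encoding cost as $\mathbb{E}_{x\sim p}[\ell(x;q)] = H(p) + \mathrm{KL}(p\,\|\,q)$, applies this to $(p_t,P_{t-1})$ and $(p_{t-1},P_{t-1})$, and subtracts. Your remarks on absolute continuity and integrability are reasonable standing assumptions that the paper leaves implicit.
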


\begin{proof}
For any pair of distributions $p, q$, the expected encoding cost decomposes as
\begin{equation*}
\mathbb{E}_{x \sim p}\bigl[\ell(x; q)\bigr]
= \mathbb{E}_{x \sim p}\Bigl[-\tfrac{1}{|x|}\log p(x)\Bigr]
+ \mathbb{E}_{x \sim p}\Bigl[\tfrac{1}{|x|}\log\tfrac{p(x)}{q(x)}\Bigr]
= H(p) + \mathrm{KL}(p \,\|\, q).
\end{equation*}
Applying this with $(p, q) = (p_t, P_{t-1})$ to the first term of Equation~\eqref{eq:Ct}, with $(p, q) = (p_{t-1}, P_{t-1})$ to the second term, and subtracting, yields the claim.
\end{proof}

\begin{corollary}[Exact-fit limit]
\label{cor:exactfit}
If the reference matches the previous-round distribution, $P_{t-1} = p_{t-1}$, then
$C_t = \mathrm{KL}(p_t \,\|\, p_{t-1}) + \bigl[H(p_t) - H(p_{t-1})\bigr]$.
\end{corollary}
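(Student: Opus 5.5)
The plan is to specialize Proposition~\ref{prop:identity} by substituting $P_{t-1} = p_{t-1}$. The identity holds for any reference distribution, so we may take the reference to be the previous-round distribution itself. The first bracket then becomes
\[
\mathrm{KL}(p_t \,\|\, p_{t-1}) - \mathrm{KL}(p_{t-1} \,\|\, p_{t-1}),
\]
and the entropy bracket $H(p_t) - H(p_{t-1})$ is unchanged, since it does not involve the reference.

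The only remaining step is to show that $\mathrm{KL}(p_{t-1}\,\|\,p_{t-1}) = 0$ under the per-token definition used in Proposition~\ref{prop:identity}. Inside the expectation the integrand is $\frac{1}{|x|}\log\frac{p_{t-1}(x)}{p_{t-1}(x)}$. This is identically zero on the support of $p_{t-1}$, which is the only place the expectation puts mass. Substituting gives the stated expression
\[
C_t = \mathrm{KL}(p_t\,\|\,p_{t-1}) + \bigl[H(p_t) - H(p_{t-1})\bigr].
\]

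No step here is hard. The one point to handle with care is well-definedness: $\mathrm{KL}(p_t\,\|\,p_{t-1})$ is finite only when $p_t \ll p_{t-1}$. If that fails, both sides equal $+\infty$, because the expected encoding cost of $p_t$ under $p_{t-1}$ also diverges. The identity therefore still holds in the extended sense, and I would note this in a single sentence rather than treat it as a separate case.

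I would also remark that $\mathrm{KL} \ge 0$ then gives $C_t \ge H(p_t) - H(p_{t-1})$. This explains the interpretation that a positive $C_t$ signals novelty unless entropy is dropping, but it is not needed for the corollary itself.
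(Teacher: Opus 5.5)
Your argument is correct and is exactly the paper's proof: specialize Proposition~\ref{prop:identity} to $P_{t-1}=p_{t-1}$ and use $\mathrm{KL}(p_{t-1}\,\|\,p_{t-1})=0$. One caution about your closing aside, which the corollary does not need: the KL here is the length-normalized quantity $\mathbb{E}_{x\sim p}\bigl[\frac{1}{|x|}\log\frac{p(x)}{q(x)}\bigr]$, and Gibbs' inequality does not apply to it (conditioning on length $n$, it equals $\sum_n \frac{p(n)}{n}\log\frac{p(n)}{q(n)}$ plus a nonnegative term, where $p(n),q(n)$ are the length marginals, and that first sum can be negative), so $C_t \ge H(p_t)-H(p_{t-1})$ holds only when the two rounds share a length distribution, as Remark~\ref{rem:underfit} notes.
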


\begin{proof}
Immediate from Proposition~\ref{prop:identity}, since $\mathrm{KL}(p_{t-1} \,\|\, p_{t-1}) = 0$.
\end{proof}

\begin{remark}[Robustness to an under-fitted reference]
\label{rem:underfit}
Proposition~\ref{prop:identity} holds for any reference \(P_{t-1}\). This explains why the small one-epoch proxy of Section~\ref{sec:lm_impl} is sufficient. Two points are worth noting.

(i)~\emph{The zero point is exact.} If \(p_t = p_{t-1}\), the two KL terms are equal and the entropy difference is zero. So \(C_t = 0\) for any \(P_{t-1}\). The reason is the baseline subtraction in Equation~\eqref{eq:Ct}. The proxy's fitting error appears in both terms and cancels. The stopping rule monitors exactly this point. This is the population form of Proposition~\ref{prop:boundary}.

(ii)~\emph{Away from zero, only the residual difference matters.} Let \(r(x) = \frac{1}{|x|}\log\frac{p_{t-1}(x)}{P_{t-1}(x)}\) be the proxy's fitting residual on \(x\). The first bracket in Proposition~\ref{prop:identity} equals \(\mathrm{KL}(p_t \,\|\, p_{t-1}) + \mathbb{E}_{p_t}[r] - \mathbb{E}_{p_{t-1}}[r]\). The gap to Corollary~\ref{cor:exactfit} is the residual difference between the two rounds. This difference is small when the residual is low and stable on the region shared by both rounds, which is criterion~(i) of Appendix~\ref{app:proxy}. Sensitivity to novel samples is criterion~(ii). The capacity trade-off of Appendix~\ref{app:proxy} therefore serves the identity. The proxy should have a stable residual where the rounds overlap, and a large cost outside.

All quantities above are averaged per token. When the two rounds have the same length distribution, \(\mathrm{KL}(p_t \,\|\, p_{t-1})\) is nonnegative. The entropy change can still be negative. A contracting loop lowers \(H(p_t)\), and this alone can make \(C_t\) negative even under exact fit. Length shifts, the residual difference in~(ii), and finite samples can also push the empirical \(C_t\) slightly below zero. These effects match the small negative values in Figure~\ref{fig:lifecycle}.
\end{remark}

\subsection{Why the gain decreases in a closed loop}

\begin{proposition}[Decreasing gain in a contracting loop]
\label{prop:monotone}
Assume the exact-fit setting of Corollary~\ref{cor:exactfit} at every round.
Suppose
\begin{equation}
\label{eq:contract}
    \mathrm{KL}(p_{t+1} \,\|\, p_t) \le \mathrm{KL}(p_t \,\|\, p_{t-1})
    \qquad\text{and}\qquad
    H(p_{t+1}) - H(p_t) \le H(p_t) - H(p_{t-1}).
\end{equation}
Then $C_{t+1} \le C_t$. If moreover $\mathrm{KL}(p_{t+1} \,\|\, p_t) \to 0$
and $H(p_{t+1}) - H(p_t) \to 0$, then $C_t \to 0$.
\end{proposition}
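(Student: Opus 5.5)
The argument reduces to Corollary~\ref{cor:exactfit}, applied at two consecutive rounds. Under the exact-fit assumption at every round, we take $P_{t-1}=p_{t-1}$ and $P_t=p_t$. The corollary then gives two closed forms:
\begin{equation*}
C_t=\mathrm{KL}(p_t\,\|\,p_{t-1})+\bigl[H(p_t)-H(p_{t-1})\bigr],
\qquad
C_{t+1}=\mathrm{KL}(p_{t+1}\,\|\,p_t)+\bigl[H(p_{t+1})-H(p_t)\bigr].
\end{equation*}
Here $C_{t+1}$ is computed with the proxy refit on round $t$, as in Section~\ref{sec:coevolution}. The only point that needs checking is that the exact-fit hypothesis is applied with the correct reference at each round, so that both identities hold at once.

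For the monotonicity claim, we subtract the two identities:
\begin{equation*}
C_{t+1}-C_t=\bigl[\mathrm{KL}(p_{t+1}\,\|\,p_t)-\mathrm{KL}(p_t\,\|\,p_{t-1})\bigr]+\bigl[(H(p_{t+1})-H(p_t))-(H(p_t)-H(p_{t-1}))\bigr].
\end{equation*}
By the two inequalities in \eqref{eq:contract}, each bracket is nonpositive, so $C_{t+1}\le C_t$. The bound comes from adding the two inequalities term by term. It does not require either term to be nonnegative, so it remains valid even when the entropy change is negative, as in Remark~\ref{rem:underfit}.

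For the limit claim, the identity gives $C_{t+1}$ as the sum of $\mathrm{KL}(p_{t+1}\,\|\,p_t)$ and $H(p_{t+1})-H(p_t)$. Both sequences tend to $0$ by hypothesis, so their sum tends to $0$, and therefore $C_t\to 0$.

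The main obstacle is making sure every quantity is finite, so that the subtractions and the convergence argument are legitimate. The per-token entropies must be finite. The per-token KL terms must also be finite, which requires $p_t\ll p_{t-1}$ at each round; otherwise $C_t=+\infty$ and the statement becomes vacuous or ill-posed. We will state these integrability and absolute-continuity conditions as implicit in the exact-fit setting. Once they are in place, the rest is direct algebra on Corollary~\ref{cor:exactfit}.
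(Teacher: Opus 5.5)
Your proposal is correct and matches the paper's proof: apply Corollary~\ref{cor:exactfit} at rounds $t$ and $t+1$, subtract so that each bracket is nonpositive under \eqref{eq:contract}, and read the limit directly from $C_{t+1}=\mathrm{KL}(p_{t+1}\,\|\,p_t)+[H(p_{t+1})-H(p_t)]$. Your added conditions on finiteness and absolute continuity are a harmless refinement that the paper leaves implicit.
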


\begin{proof}
By Corollary~\ref{cor:exactfit},
\begin{equation*}
    C_{t+1} - C_t
    =
    \bigl[\mathrm{KL}(p_{t+1} \,\|\, p_t) - \mathrm{KL}(p_t \,\|\, p_{t-1})\bigr]
    +
    \bigl[(H(p_{t+1}) - H(p_t)) - (H(p_t) - H(p_{t-1}))\bigr].
\end{equation*}
Both brackets are nonpositive under condition~\eqref{eq:contract}. The limit
statement follows by reading $C_{t+1} = \mathrm{KL}(p_{t+1}\,\|\,p_t) +
[H(p_{t+1}) - H(p_t)]$ directly.
\end{proof}

\begin{remark}[When the conditions hold]
\label{rem:contract}
Condition~\eqref{eq:contract} says the loop contracts. Successive rounds move
less, and the entropy change does not grow. A closed loop has this property
because the model trains only on its own outputs. Repeated training of this
form shrinks the effective hypothesis space \citep{mobahi2020self}.
Proposition~\ref{prop:phase3} shows the same mechanism in our setting. The
condition fails once external data enters. New data enlarges the shift
$\mathrm{KL}(p_{t+1}\,\|\,p_t)$, so the gain can rise again. This is why the
external phase of Appendix~\ref{app:directional} restores the gain. Finally,
the proposition assumes an exactly fitted reference. With the underfit proxy
of Section~\ref{sec:lm_impl}, the comparison holds up to the residual
differences of Remark~\ref{rem:underfit}.
\end{remark}

\subsection{Sample average equals the dataset gain}

\begin{proposition}[Sample-average identity]
\label{prop:avg}
$\frac{1}{|\mathcal{D}_t|} \sum_{e \in \mathcal{D}_t} c_t(e) = C_t.$
\end{proposition}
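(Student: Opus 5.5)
The plan is to prove the identity by linearity of the empirical average, once the sample-level gain $c_t(e)$ is pinned down for a full pair. I will take $c_t(e)$ to be the full-text analogue of Equations~\eqref{eq:cQ} and~\eqref{eq:cS}. For $e=(q,s)$ with full text $x=[q;s]$, set
\begin{equation*}
c_t(e) = \ell(e; P_{t-1}) - \bar{\ell}(\mathcal{D}_{t-1}),
\end{equation*}
where $\bar{\ell}(\mathcal{D}_{t-1}) = \frac{1}{|\mathcal{D}_{t-1}|}\sum_{e'\in\mathcal{D}_{t-1}}\ell(e';P_{t-1})$ is the previous-round mean cost under the same proxy $M_{t-1}$. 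This is exactly the baseline appearing in Equation~\eqref{eq:Ct}.

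The main step is a one-line computation. Average $c_t(e)$ over $e\in\mathcal{D}_t$. The baseline $\bar{\ell}(\mathcal{D}_{t-1})$ does not depend on $e$, so it comes out of the sum unchanged. The remaining term is $\frac{1}{|\mathcal{D}_t|}\sum_{e\in\mathcal{D}_t}\ell(e;P_{t-1}) = \bar{\ell}(\mathcal{D}_t)$. The difference is then $\bar{\ell}(\mathcal{D}_t)-\bar{\ell}(\mathcal{D}_{t-1}) = C_t$ by Equation~\eqref{eq:Ct}. The only care needed is that both averages use the same reference $P_{t-1}$, and that $\ell$ is length-normalized per sample and not per token across the dataset. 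Equation~\eqref{eq:Ct} uses the same per-sample averaging, so the two conventions agree.

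The step I expect to be the real obstacle is connecting this to the per-segment weights actually used in training. The natural guess $c_t(e)=c_t^Q(q)+c_t^S(s\mid q)$ fails, because per-token averages are not additive and $c_t^Q, c_t^S$ are centered at unweighted segment means. I would handle this with the chain-rule identity~\eqref{eq:chain}, which gives
\begin{equation*}
\ell(e;P_{t-1}) = \lambda_e\,\ell(q;P_{t-1}) + (1-\lambda_e)\,\ell(s\mid q;P_{t-1}).
\end{equation*}
With this, $c_t(e)$ splits into length-weighted question and solution terms, each centered at the length-weighted baselines $\ell^\lambda_q(\mathcal{D}_{t-1})$ and $\ell^\lambda_{s\mid q}(\mathcal{D}_{t-1})$. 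Averaging over $\mathcal{D}_t$ then reproduces the exact split~\eqref{eq:Ct_split}.

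I will state explicitly that the proposition holds for the full-pair $c_t$, and for the segment form only with these $\lambda$-weighted centerings. With unweighted centerings, the sum $c_t^Q+c_t^S$ averages to $C_t$ only up to a length-covariance correction. That correction vanishes when $\lambda_e$ is constant across both rounds.
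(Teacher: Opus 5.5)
Your proof is correct and is the paper's own argument: with $c_t(e)=\ell(e;P_{t-1})-\bar{\ell}(\mathcal{D}_{t-1})$, the baseline is constant in $e$, so the average over $\mathcal{D}_t$ collapses to $\bar{\ell}(\mathcal{D}_t)-\bar{\ell}(\mathcal{D}_{t-1})=C_t$. One correction to your closing side remark: even with constant $\lambda_e=\lambda$, the plain sum $c_t^Q+c_t^S$ averages to $[\ell_q(\mathcal{D}_t)-\ell_q(\mathcal{D}_{t-1})]+[\ell_{s\mid q}(\mathcal{D}_t)-\ell_{s\mid q}(\mathcal{D}_{t-1})]$ rather than to $C_t$; it is the weighted combination $\lambda_e c_t^Q+(1-\lambda_e)c_t^S$ whose average equals $C_t$ up to a length-covariance term, and that term vanishes when $\lambda_e$ is constant.
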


\begin{proof}
By definition,
\begin{equation*}
c_t(e) = \ell(e; P_{t-1}) - \ell(\mathcal{D}_{t-1}; P_{t-1}).
\end{equation*}
Averaging over $e \in \mathcal{D}_t$,
\begin{equation*}
\frac{1}{|\mathcal{D}_t|} \sum_{e \in \mathcal{D}_t} c_t(e) = \underbrace{\frac{1}{|\mathcal{D}_t|} \sum_{e \in \mathcal{D}_t} \ell(e; P_{t-1})}_{= \ell(\mathcal{D}_t; P_{t-1})} - \ell(\mathcal{D}_{t-1}; P_{t-1}) = C_t.
\end{equation*}
The proposition justifies using $C_t$ as the round-level summary and $c_t(e)$ as its sample-level decomposition (Section~\ref{sec:gain}).
\end{proof}

\subsection{Question--Solver decomposition}

\begin{proposition}[Two-step decomposition]
\label{prop:twostep}
For samples $(q, s) \in \mathcal{D}_t$ generated by a two-step Question--Solver process,
\begin{equation*}
C_t = \underbrace{\big[\ell^{\lambda}_q(\mathcal{D}_t) - \ell^{\lambda}_q(\mathcal{D}_{t-1})\big]}_{C_t^Q} \;+\; \underbrace{\big[\ell^{\lambda}_{s\mid q}(\mathcal{D}_t) - \ell^{\lambda}_{s\mid q}(\mathcal{D}_{t-1})\big]}_{C_t^S},
\end{equation*}
where $\ell^{\lambda}_q$ and $\ell^{\lambda}_{s\mid q}$ are the length-weighted dataset averages defined in Appendix~\ref{app:Ct_decomposition}.
\end{proposition}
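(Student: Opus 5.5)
The argument is a direct per-sample computation followed by averaging, along the lines of Appendix~\ref{app:Ct_decomposition}. I will fix one sample $x=(q,s)$ and write its full-sequence cost under $P_{t-1}$ as a sum of token-level log-probabilities. Because $x$ is the concatenation $[q;s]$ and the proxy is autoregressive, the token sum splits at the boundary between $q$ and $s$. The first $|q|$ terms give $-\log P_{t-1}(q)$. The remaining $|s|$ terms, each conditioned on the full prefix including $q$, give $-\log P_{t-1}(s\mid q)$.

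Dividing by $|q|+|s|$ and rewriting each segment sum in terms of its own per-token average gives the pointwise identity~\eqref{eq:chain}:
\[
\ell(q,s;P_{t-1})=\lambda_e\,\ell(q;P_{t-1})+(1-\lambda_e)\,\ell(s\mid q;P_{t-1}),
\]
with $\lambda_e=|q|/(|q|+|s|)$. This holds exactly for every sample, with no distributional assumption.

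Next I average this identity over $\mathcal{D}_t$ and over $\mathcal{D}_{t-1}$ separately. By linearity of the empirical mean, $\bar\ell(\mathcal{D})=\ell^{\lambda}_q(\mathcal{D})+\ell^{\lambda}_{s\mid q}(\mathcal{D})$ for each $\mathcal{D}\in\{\mathcal{D}_t,\mathcal{D}_{t-1}\}$. Here the same proxy $P_{t-1}$ is used for both datasets, and $\lambda_e$ is computed per sample inside each dataset, which matches the definitions of $\ell^{\lambda}_q$ and $\ell^{\lambda}_{s\mid q}$. Substituting into~\eqref{eq:Ct} and grouping the question terms and the solution terms yields $C_t=C_t^Q+C_t^S$.

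\textbf{Main obstacle: the length weights.} Per-token averages are not additive, so the decomposition only holds with the per-sample weights $\lambda_e$ kept inside the dataset averages. In particular, $C_t^Q$ and $C_t^S$ are \emph{not} the plain averages of the unweighted segment gains $c_t^Q$ and $c_t^S$ of Sections~\ref{sec:q_training} and~\ref{sec:s_training}. I will state this explicitly.

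I will also point out the one modelling convention the proof relies on. $\ell(q;P_{t-1})$ must be scored from the same starting context used when scoring the full sequence $[q;s]$, for example the same BOS token. If it is not, the first term picks up a constant discrepancy.
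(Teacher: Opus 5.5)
Your proposal is correct and is essentially the paper's proof. The paper multiplies the chain rule $\log P_{t-1}(q,s)=\log P_{t-1}(q)+\log P_{t-1}(s\mid q)$ by $-1/(|q|+|s|)$ to get Equation~\eqref{eq:chain}, then averages over $\mathcal{D}_t$ and $\mathcal{D}_{t-1}$ and substitutes into Equation~\eqref{eq:Ct}; your two caveats (that $C_t^Q$ and $C_t^S$ are not plain averages of the unweighted $c_t^Q$ and $c_t^S$, and that $q$ must be scored from the same starting context as $[q;s]$) are accurate points the paper leaves implicit.
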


\begin{proof}
The chain rule of conditional probability gives
\begin{equation*}
\log P_{t-1}(q, s) = \log P_{t-1}(q) + \log P_{t-1}(s \mid q).
\end{equation*}
Multiplying by $-1/(|q|+|s|)$ and using the definition of $\ell$ yields Equation~\eqref{eq:chain}. Averaging over $\mathcal{D}_t$ and $\mathcal{D}_{t-1}$ and substituting into the definition of $C_t$ yields the claim.
\end{proof}

\subsection{Connection to self-distillation in Phase III}

\begin{proposition}[Phase III as self-distillation]
\label{prop:phase3}
Suppose two conditions hold at every round $t \geq T$. (i)~$C_t \leq 0$. (ii)~The main model $\pi_\theta$ has fitted $\mathcal{D}_{t-1}$ well enough to place its high-probability mass on $\mathrm{supp}(\mathcal{D}_{t-1})$, that is, it has low cross-entropy on $\mathcal{D}_{t-1}$. Then for every $t \geq T$, the SFT update of $\pi_\theta$ on $\mathcal{D}_t$ is approximately a self-distillation step of $\pi_\theta$ in the sense of~\citet{mobahi2020self}.
\end{proposition}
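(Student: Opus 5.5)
The argument compares the SFT gradient on $\mathcal{D}_t$ with the gradient of the self-distillation objective. Mobahi et al.'s self-distillation step retrains the model on targets produced by the model itself. In the LLM setting I take this to mean minimizing $\mathbb{E}_{x\sim \tilde\pi_\theta}[-\log \pi_\theta(x)]$ with $\tilde\pi_\theta$ held fixed, where $\tilde\pi_\theta$ is $\pi_\theta$ restricted, and renormalized, to its high-probability region. The SFT objective on round $t$ is $\mathbb{E}_{x\sim \hat p_t}[-\log\pi_\theta(x)]$, whose gradient is $-\mathbb{E}_{\hat p_t}[\nabla\log\pi_\theta(x)]$. The plan is therefore to show that $\hat p_t \approx \tilde\pi_\theta$ in a sense strong enough to make the two gradients close.

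The first step turns condition (i) into closeness of $p_t$ and $p_{t-1}$. I would use Corollary~\ref{cor:exactfit}, together with Remark~\ref{rem:underfit} to control the proxy residual. This gives $C_t = \mathrm{KL}(p_t\|p_{t-1}) + [H(p_t)-H(p_{t-1})] \le 0$, hence $\mathrm{KL}(p_t\|p_{t-1}) \le H(p_{t-1}) - H(p_t)$.

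Two consequences follow:
\begin{itemize}
\item The round-$t$ data carry no mass outside what $p_{t-1}$ already covers. Finite KL forces $\mathrm{supp}(p_t)\subseteq\mathrm{supp}(p_{t-1})$.
\item Entropy is non-increasing. So $p_t$ is a reweighting of $p_{t-1}$ that concentrates mass, with KL bounded by the entropy drop.
\end{itemize}
I would then use Pinsker to bound $\|p_t - p_{t-1}\|_{TV}$ by $\sqrt{(H(p_{t-1})-H(p_t))/2}$. This is the ``approximately'' slack in the statement.

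The second step uses condition (ii). Low cross-entropy of $\pi_\theta$ on $\mathcal{D}_{t-1}$ means $\mathrm{KL}(p_{t-1}\|\pi_\theta)$ is small relative to the entropy, by the same decomposition as in the proof of Proposition~\ref{prop:identity}. It also means $\pi_\theta$'s high-probability set $S_\theta$ contains $\mathrm{supp}(p_{t-1})$ up to small mass. Combining this with step one, $p_t$ is supported in $S_\theta$. I would make $\tilde\pi_\theta$ concrete as $\pi_\theta$ conditioned on $S_\theta$, possibly sharpened, for example via a temperature below one, to account for the entropy drop. Then $p_t$ is close to $\tilde\pi_\theta$ in TV, up to errors controlled by $\mathrm{KL}(p_{t-1}\|\pi_\theta)$ and the Pinsker term.

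The final step converts closeness of distributions into closeness of gradients. Assuming bounded score norms $\|\nabla\log\pi_\theta(x)\|\le G$ on $S_\theta$, we get
\[
\bigl\|\mathbb{E}_{p_t}[\nabla\log\pi_\theta]-\mathbb{E}_{\tilde\pi_\theta}[\nabla\log\pi_\theta]\bigr\|\le 2G\,\|p_t-\tilde\pi_\theta\|_{TV}.
\]
This shows the SFT update equals a self-distillation update up to that error, for every $t\ge T$. Empirical versus population averages add a sampling term, which I would treat as part of the approximation.

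I expect the main obstacle to be the second step: making ``high-probability mass'' precise enough that $\tilde\pi_\theta$ is a well-defined target and the TV bound really follows. In particular, matching the entropy contraction of $p_t$ to a specific sharpening of $\pi_\theta$ is not forced by the hypotheses. I would first try the weaker claim, namely that the update is distillation onto $\pi_\theta$ conditioned on its support. I would then note that the entropy drop only makes the target more concentrated, which is the direction Mobahi et al.'s contraction result needs.
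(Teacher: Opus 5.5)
Your final step cannot deliver the conclusion, because the error you can certify is always at least as large as the quantity you are approximating. By the score identity, $\mathbb{E}_{x\sim\pi_\theta}[\nabla\log\pi_\theta(x)]=\nabla\sum_x\pi_\theta(x)=0$. So, with your score bound $G$ extended to all sequences, the target satisfies $\|\mathbb{E}_{\tilde\pi_\theta}[\nabla\log\pi_\theta]\|=\|\mathbb{E}_{\tilde\pi_\theta}[\nabla\log\pi_\theta]-\mathbb{E}_{\pi_\theta}[\nabla\log\pi_\theta]\|\le 2G\|\tilde\pi_\theta-\pi_\theta\|_{TV}$. For plain conditioning on $S_\theta$ it equals $-\frac{1}{1-\epsilon}\sum_{x\notin S_\theta}\pi_\theta(x)\nabla\log\pi_\theta(x)$ with $\epsilon=\pi_\theta(S_\theta^c)$, i.e.\ it is $O(G\epsilon)$. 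Hypothesis (ii), however, ties the data only to $\pi_\theta$ itself, not to any truncation or sharpening of it. Your certified TV bound therefore runs $p_t\to p_{t-1}\to\pi_\theta\to\tilde\pi_\theta$ and contains $\|\pi_\theta-\tilde\pi_\theta\|_{TV}$ as a summand. Hence your certified error always covers the whole self-distillation gradient, and the identical chain shows that the SFT gradient is approximately zero. What you actually prove is that $\mathcal{D}_t$ looks like a sample from the current model, which is exactly the regime in which the update does nothing. It says nothing about the mass-concentrating direction that makes the step a contraction in the sense of Mobahi et al., and that direction is what the proposition is used for in Phase~III. Your fallback ``weaker claim'' is precisely this degenerate case. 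Your worry that the sharpening is not forced by the hypotheses is well founded: it does not come from (i) or (ii) at all.

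The missing ingredient is the data-generation mechanism, which the paper's proof invokes explicitly. It uses (i) to place $\mathcal{D}_t$ in the proxy level set $\mathcal{S}_{t-1}$, and (ii) to place that set inside $\pi_\theta$'s high-probability region. It then notes that unrestricted self-samples would give zero expected gradient. It attributes the nontrivial direction to how $\mathcal{D}_t$ is produced: sampling at temperature $0.7$ with top-$p$ $0.95$, followed by verifier filtering, cuts off low-probability outputs. The SFT loss is then the self-NLL under a truncated $\pi_\theta$ and moves mass toward the high-probability region. To repair your quantitative version, define $\tilde\pi_\theta$ as that truncated generating distribution of the model being updated, which is the model that produced $\mathcal{D}_t$. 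Then $\hat p_t$ is an empirical sample of $\tilde\pi_\theta$, and the gradient discrepancy is a pure sampling term rather than slack inherited from the hypotheses. Conditions (i)--(ii) then only serve to rule out any shift beyond this self-generated truncation.

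Two smaller issues remain. First, your first step needs the exact-fit assumption of Corollary~\ref{cor:exactfit}, which the proposition does not make. The residual difference from Remark~\ref{rem:underfit} that replaces it is controlled by neither hypothesis, since (ii) concerns $\pi_\theta$ rather than the proxy. Second, the per-token KL can be negative when the two rounds have different length distributions. So both your use of Pinsker and your inference $H(p_t)\le H(p_{t-1})$ need matched lengths or a length bound.
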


\begin{proof}
We show in three steps that the SFT loss on $\mathcal{D}_t$ becomes a self-referential objective on $\pi_\theta$, and then invoke~\citet{mobahi2020self} for the self-distillation interpretation.

\emph{Step 1: $\mathcal{D}_t$ is contained in the proxy's view of $\mathrm{supp}(\mathcal{D}_{t-1})$.}\;
By assumption (i) and the definition of $C_t$ in~\eqref{eq:Ct},
\begin{equation*}
\ell(\mathcal{D}_t; P_{t-1}) \;\leq\; \ell(\mathcal{D}_{t-1}; P_{t-1}).
\end{equation*}
The reference $P_{t-1}$ is fitted on $\mathcal{D}_{t-1}$, so its encoding cost attains a low plateau on samples drawn from $\mathcal{D}_{t-1}$ and grows on samples outside it. The inequality therefore implies that $\mathcal{D}_t$ is supported, on average, within the level set
\begin{equation*}
\mathcal{S}_{t-1} \;\triangleq\; \{x : \ell(x; P_{t-1}) \leq \ell(\mathcal{D}_{t-1}; P_{t-1})\},
\end{equation*}
which is the proxy's characterization of $\mathrm{supp}(\mathcal{D}_{t-1})$. In the limiting case $C_t = 0$, the inequality holds with equality.

\emph{Step 2: $\mathcal{D}_t$ lies in $\pi_\theta$'s high-probability region.}\;
By assumption (ii), $\pi_\theta$ assigns its high-probability mass to $\mathrm{supp}(\mathcal{D}_{t-1})$. By Step~1, $\mathcal{D}_t$ lies mostly within this region. Most $x \in \mathcal{D}_t$ therefore satisfy $\pi_\theta(x) \geq \rho$ for a threshold $\rho$ set by how well round $t-1$ was fitted. The training set is thus drawn mainly from the high-probability region of the same model that is being trained.

\emph{Step 3: the SFT loss reduces to a self-referential objective.}\;
The SFT loss is
\begin{equation*}
\mathcal{L}_{\mathrm{SFT}}(\theta) \;=\; -\mathbb{E}_{x \sim \mathcal{D}_t}[\log \pi_\theta(x)].
\end{equation*}
Since $\mathcal{D}_t$ is sampled from a distribution concentrated on $\pi_\theta$'s own high-probability region (Step 2), the empirical expectation approximates the population expectation taken under $\pi_\theta$ restricted to $\mathcal{S}_{t-1} \cap \{\pi_\theta(\cdot) \geq \rho\}$,
\begin{equation*}
\mathcal{L}_{\mathrm{SFT}}(\theta) \;\approx\; -\mathbb{E}_{x \sim \pi_\theta\,|\,\mathcal{S}_{t-1}}[\log \pi_\theta(x)].
\end{equation*}
This is the negative log-likelihood of $\pi_\theta$ under itself, the defining form of a self-distillation step. The restriction matters. Without it, samples from $\pi_\theta$ itself would give a zero expected gradient. Sampling at temperature $0.7$ with top-$p$ $0.95$ and filtering by the verifier both cut off low-probability outputs. The update therefore moves probability mass toward the high-probability region.

\emph{Conclusion.}\;
Iterating across rounds $t \geq T$ corresponds to repeated minimisation of this self-referential loss. By the analysis of~\citet{mobahi2020self}, each such step shrinks the effective hypothesis space of $\pi_\theta$ (in the Hilbert-space regularisation sense), progressively suppressing low-probability branches. Phase III's contraction is the repeated demotion of correct but low-probability solution paths. This is the corresponding mechanism in our setting.
\end{proof}

The main experiments train with GRPO rather than SFT. Samples with positive advantage enter the GRPO gradient as weighted log-likelihood terms, so the argument above applies to them with sample weights.

\section{Directional information gain and external-phase training}
\label{app:directional}

This appendix describes an optional extension of the closed-loop framework of Section~\ref{sec:methodology}. It continues training on samples drawn from outside the loop after the closed-loop $C_t$ has saturated. The extension introduces a directional counterpart to $C_t$ and uses it with $c_t$ to select external samples in a second training phase.

\paragraph{Why a directional gain is needed.}
Section~\ref{sec:coevolution} shows that continuing closed-loop training while $C_t \to 0$ drives the main model into the contraction of Phase~III. To keep improving, the model must take in samples from outside the closed loop. However, $C_t$ alone is insufficient for selecting such samples. A positive $c_t(e)$ only indicates that $e$ lies outside $\mathcal{D}_{t-1}$, not that it addresses the model's current weakness. A sample outside $\mathcal{D}_{t-1}$ that the model already handles correctly merely reinforces existing capability when added to training. We therefore construct a complementary criterion that identifies whether a sample targets that weakness.

\paragraph{Definition.}
Partition $\mathcal{D}_t$ by the verifier into a correct part $\mathcal{D}_t^+$ and an incorrect part $\mathcal{D}_t^-$, and fit reference distributions $P_t^+$ on $\mathcal{D}_t^+$ and $P_t^-$ on $\mathcal{D}_t^-$. The two thus capture the current correct and incorrect behavior of the model. For an external sample $e$, the \textbf{directional learnable information gain} is
\begin{equation}
    C_t^d(e) = \ell(e; P_t^+) - \ell(e; P_t^-).
    \label{eq:Ctd}
\end{equation}
$C_t^d(e) > 0$ indicates that the encoding cost of $e$ under $P_t^-$ is lower than under $P_t^+$, that is, $e$ is more similar to the samples the model gets wrong. The model performs poorly on these, so adding $e$ to training targets its weakness. $C_t^d(e) < 0$ indicates the opposite. Then $e$ is more similar to the samples the model already handles, and adding it merely reinforces existing capability. The two scores therefore serve as independent criteria. The first controls whether $e$ is novel relative to $\mathcal{D}_{t-1}$. The second controls whether $e$ targets the model's weakness. Samples positive on both pass the dual gate of the external phase.

\paragraph{Estimation and selection.}
We implement $P_t^+$ and $P_t^-$ as proxies $M_t^+$ and $M_t^-$, fine-tuned on $\mathcal{D}_t^+$ and $\mathcal{D}_t^-$ at the start of the external phase under the protocol of Section~\ref{sec:lm_impl}. $C_t^d(e)$ is then the encoding-cost gap between the two proxies on $e$,
\begin{equation}
    C_t^d(e) = \ell_{M_t^+}(e) - \ell_{M_t^-}(e).
\end{equation}
Like $c_t(e)$, $C_t^d(e)$ is computed once per candidate $e$ in the external pool $E$ before training. A candidate enters training only when both $c_t(e)$ and $C_t^d(e)$ are positive.

\paragraph{External-phase procedure.}
The closed-loop phase of Section~\ref{sec:coevolution} runs until $C_t$ falls below the threshold $\tau$ for two consecutive rounds. At the stopping round, $\mathcal{D}_t$ is partitioned by the verifier into $\mathcal{D}_t^+$ and $\mathcal{D}_t^-$, and the proxies $M_t^+$ and $M_t^-$ are fine-tuned following the protocol of Section~\ref{sec:lm_impl}. From this round onward, training samples are drawn from the external pool $E$ rather than from the closed-loop $\mathcal{D}_t$. For each candidate $e \in E$, both $c_t(e)$ and $C_t^d(e)$ are computed, and only candidates that pass the dual gate enter training. The external phase terminates when $C_t$ and $C_t^d$ remain below threshold simultaneously over an extended period, indicating that the external pool no longer contributes training signal under either criterion.

\paragraph{Other external sources.}
The external pool is not restricted to static corpora. Samples built from tool feedback or human feedback enter it as ordinary candidates. Both $c_t(e)$ and $C_t^d(e)$ depend only on the sample text, so the same dual gate applies to any source.

\section{Limitations and Broader Impact}
\label{app:limitations}

\paragraph{Limitations.}
ATRI relies on three assumptions whose violation is worth flagging. (i)~\emph{Closed-loop structure.} The diagnostic $C_t = f(\mathcal{D}_t, \mathcal{D}_{t-1})$ assumes the existence of a well-defined per-round training set. Appendix~\ref{app:stop_robust} replaces rounds by fixed token windows and finds that the decision moves by at most one checkpoint. Settings without round boundaries therefore still require a window size to be chosen. (ii)~\emph{Proxy approximation.} $C_t$ is computed through a finite-capacity proxy language model rather than the true previous-round distribution $p_{t-1}$. The sensitivity sweep in Table~\ref{tab:sensitivity} shows this is robust within the standard range, but pathological dataset distributions (e.g.\ extreme length skew) could in principle break the proxy's calibration. We have not encountered this in practice. (iii)~\emph{External pool quality.} The external phase assumes access to an external candidate pool $E$ that contains samples beyond $\mathcal{D}_t$ in some directions. When such samples are unavailable (e.g.\ in fully novel domains where no external data exists), the external phase has nothing to add. Training then ends at the stopping round, and ATRI's gain comes from the closed-loop control alone.

A further limitation concerns \emph{verifier reliability}. The ATRI loop uses majority-vote pseudo-labels, and the Vanilla analysis of Section~\ref{sec:obs} uses ground-truth answer matching. Neither involves a learned reward model. In deployments where the verifier is itself a learned model, classical reward-hacking failure modes can resurface. The alarm of $C_t$ may be less affected, since $C_t$ depends on the dataset-level distribution of $\mathcal{D}_t$ rather than on per-sample verifier labels. This setting is not tested here. The absolute accuracy ceiling is still verifier-bounded.

Finally, the threshold $\tau = 0.1\,C_1$ is an empirical choice whose insensitivity is established only over the range $\alpha \in [0.05, 0.30]$ (Table~\ref{tab:sensitivity}). The constant $0.1$ reflects a trade-off rather than a derived optimum.

The theoretical statements hold under stated assumptions. Corollary~\ref{cor:exactfit} requires the proxy to fit the previous round exactly. The monotone decline of Proposition~\ref{prop:monotone} requires the contraction condition. External data can break this condition. The lifecycle itself is an empirical regularity rather than a theorem.

Our evidence for the lifecycle covers Qwen and Llama base models, MATH and MBPP tasks, Pythia and OPT proxies, and six training algorithms. Claims beyond these settings remain empirical extrapolation.

\paragraph{Compute resources.}
All experiments are run on internal GPU clusters of NVIDIA H800 80GB and A100 80GB nodes. Each main-comparison run uses a single 8-GPU node. The Pythia-160M proxy fitting and per-sample $C_t$ computation at each round are negligible relative to the main fine-tuning cost. Per-round PFLOPs for the $C_t$ control layer are reported in Table~\ref{tab:efficiency}, and the proxy-capacity rationale is detailed in Appendix~\ref{app:proxy}. Table~\ref{tab:gpuhours} adds measured per-round GPU-hours at three base-model scales. The main loop grows with the base model. The $C_t$ control layer stays near 0.35 GPU-hours because the proxy size is fixed. The relative overhead therefore falls from 4.1\% at 4B to 1.3\% at 14B.

\begin{table}[h]
\centering
\caption{Measured per-round cost in GPU-hours at three base-model scales.}
\label{tab:gpuhours}
\scriptsize
\setlength{\tabcolsep}{6pt}
\renewcommand{\arraystretch}{1.15}
\begin{tabular}{lccc}
\toprule[1.2pt]
\textbf{Base model} & \makecell{\textbf{Main}\\\textbf{loop}} & \makecell{\textbf{Proxy}\\\textbf{$+$ $C_t$}} & \makecell{\textbf{Relative}\\\textbf{overhead}} \\
\midrule
Qwen3-4B-Base  & 8.2  & 0.35 & 4.1\% \\
Qwen3-8B-Base  & 15.5 & 0.35 & 2.3\% \\
Qwen3-14B-Base & 26.8 & 0.35 & 1.3\% \\
\bottomrule[1.2pt]
\end{tabular}
\end{table}

\paragraph{Broader impact.}
The most direct consequence of $C_t$-based control is a reduction in wasted compute. Terminating self-evolution at the diagnosed saturation point avoids post-peak training. In the 15-round runs of Figure~\ref{fig:lifecycle}, the six baselines train for $6$ to $11$ rounds after their accuracy peak. The saved compute can be left unspent or moved to the external phase.

A second consequence is methodological. $C_t$ provides a unified diagnostic that can be adopted on top of any existing self-evolution algorithm without re-engineering. It lowers the cost of comparing methods and encourages more honest reporting of long-horizon trajectories rather than peak-accuracy snapshots.

Like any technique that improves the training efficiency of LLMs, ATRI accelerates the underlying capability trajectory of self-evolving systems. The accompanying considerations are familiar from the broader self-improvement literature. More efficient autonomous training raises the importance of robust verifiers, of evaluation suites that probe generation diversity rather than only single-sample accuracy, and of alignment work that scales with capability. Our work contributes a diagnostic tool rather than a new capability vector. Practitioners adopting it should also report dimensions beyond accuracy, such as pass@$k$, behavioral diversity, and out-of-distribution behavior.

\end{document}